\documentclass[11pt]{article}
\usepackage[a4paper,bindingoffset=0.2in,%
            left=0.9in,right=0.9in,top=1in,bottom=1in,%
            footskip=.25in]{geometry}





\usepackage[utf8]{inputenc} 
\usepackage[T1]{fontenc}    
\usepackage[draft]{hyperref}       
\usepackage{url}            
\usepackage{booktabs}       
\usepackage{amsfonts}       
\usepackage{nicefrac}       
\usepackage{microtype}      

\usepackage{amsmath,amsthm}
\usepackage{graphicx}
\usepackage{latexsym}                
\usepackage{dsfont}                  
\usepackage{mathtools}               
\usepackage{xcolor}                  
\usepackage{subfigure}
\usepackage{pgf}
\usepackage{pgfpages}                

\usepackage[printonlyused]{acronym}
\usepackage{amssymb,amsmath}
\usepackage{bm}
\newcommand{\mbf}{\mathbf}
\newcommand{\mbb}{\mathbb}
\newcommand{\mcl}{\mathcal}
\newcommand{\mrm}{\mathrm}
\newcommand{\real}{\mathbb R}
\newcommand{\T}{\text{T}}
\newcommand{\ma}[1]{{\mathbf #1}}
\renewcommand{\epsilon}{\varepsilon}
\DeclareMathOperator*{\argmin}{argmin}
\DeclareMathOperator*{\argmax}{argmax}

\def\N{\mathbb N}

\newtheorem{proposition}{Proposition}
\newtheorem{theorem}{Theorem}
\newtheorem{problem}{Problem}
\newtheorem{definition}{Definition}


\usepackage{algorithm,algorithmic}
\usepackage[colorinlistoftodos,prependcaption]{todonotes} 

\usepackage{epstopdf}
\epstopdfsetup{update} 


\definecolor{myred}{rgb}{0,0,0}
\definecolor{myred2}{rgb}{0,0,0}
\definecolor{myblue}{rgb}{0,0,0}
\definecolor{mygreen}{rgb}{0,0,0}

\title{Perturbation Analysis of Learning Algorithms: \\A Unifying Perspective on Generation \\ of Adversarial Examples}

\author{Emilio Rafael~Balda,
        Arash~Behboodi,
        and~Rudolf~Mathar
\thanks{Institute for Theoretical Information Technology (TI), RWTH Aachen University.}}


\begin{document}
\acrodef{DNN}{Deep Neural Network}
\acrodef{FGSM}{Fast Gradient Sign Method}
\acrodef{GNM}{gradient-based norm-constrained method}
\acrodef{BALDA}{Bounded Attack with Linearly Directed Approximation}
\acrodef{BIM}{Basic Iterative Method}
\acrodef{PGD}{Projected Gradient Descent}

\maketitle

\begin{abstract}
Despite the tremendous success of deep neural networks in various learning problems, it has been observed that adding an intentionally designed adversarial perturbation to inputs of these architectures leads to erroneous classification with high confidence in the prediction. In this work, we propose a general framework based on the perturbation analysis of learning algorithms which consists of convex programming and is able to recover many current adversarial attacks as special cases. The framework can be used to propose novel attacks against learning algorithms for classification and regression tasks under various new constraints with closed form solutions in many instances. In particular we derive new attacks against classification algorithms which are shown to achieve comparable performances to notable existing attacks. The framework is then used to generate adversarial perturbations for regression tasks which include single pixel and single subset attacks. By applying this method to autoencoding and image colorization tasks, it is shown that adversarial perturbations can effectively perturb the output of regression tasks as well.
\end{abstract}

%
\section{Introduction}
\label{sec:intro} 

\acp{DNN} excelled in recent years in  many learning tasks and demonstrated outstanding achievements in  speech analysis \cite{hinton_deep_2012} and visual tasks \cite{krizhevsky2012imagenet,he_deep_2016,szegedy_going_2015,ren_faster_2017}.
Despite their success, they have been shown to suffer from instability in their classification under adversarial perturbations \cite{szegedy_intriguing_2014}. Adversarial perturbations are intentionally worst case designed noises that aim at changing the output of a \ac{DNN} to an incorrect one. 
The explosion of research during past years makes it almost impossible to refer to all important works in this area and do justice to all excellent works. 
However, we refer to several important results from the literature, that are highly connected to this paper.

Although \acp{DNN} might achieve robustness to random noise \cite{fawzi_robustness_2016}, it was shown that there is a clear distinction between the robustness of a classifier to random noise and its robustness to adversarial perturbations.  The existence of adversarial perturbations was known for machine learning algorithms \cite{barreno_security_2010}, however, they were first noticed in deep learning research in \cite{szegedy_intriguing_2014}. 
The peculiarity of adversarial perturbations lied in the fact that they managed to fool state of the art networks \textcolor{myblue}{into} making confident and wrong decisions in classification tasks, and they, nevertheless, appeared unperceived to \textcolor{myblue}{the} naked eye. These discoveries gave rise to extensive research on understanding the instability of \acp{DNN}, exploring various attacks and devising multiple defenses (for instance refer to \cite{akhtar_threat_2018,wang_theoretical_2017,fawzi_fundamental_2015} and references therein).
%
%
\textcolor{myblue}{Most adversarial} attacks fall generally  into two \textcolor{myblue}{classes,} white-box and black-box attacks. In white-box attacks, the attacker knows completely the architecture of the target algorithms and additionally, there are attacks with partial knowledge of the architecture.  
However, black-box attacks require no information about the target neural network, see for instance \cite{sarkar2017upset}. 
In this work, the focus is on white-box attacks. 
The overall aim of attacks, as in \cite{goodfellow_explaining_2014,moosavi2016deepfool,rao_universal_2012},
is to apply perturbations to the system inputs, that are not perceived by the system's administrator, such that the performance of the system is severely degraded. 

{Adversarial} perturbations were obtained  in \cite{szegedy_intriguing_2014} to maximize the prediction error at the output and were approximated using box-constrained L-BFGS. 
{The \ac{FGSM}} in \cite{goodfellow_explaining_2014} was based on finding the scaled sign of the gradient of the cost function. 
Note that the \ac{FGSM} aims at minimizing $\ell_\infty$-norm of the perturbation while the former algorithm minimizes $\ell_2$-norm of the perturbation under box constraint on the perturbed example. 

More effective attacks utilize either iterative procedures or randomizations. The algorithm DeepFool \cite{moosavi2016deepfool} conducts an iterative linearization of the \ac{DNN} to generate perturbations that are minimal in the $\ell_p$-norm for $p>1$. 
In \cite{BIM2016} the authors propose an iterative version of \ac{FGSM}, called \ac{BIM}.
This method was later extended in \cite{pgd_attack}, where randomness was introduced in the computation of adversarial perturbations. 
This attack is called the \ac{PGD} method and {was} employed in \cite{pgd_attack} to devise a \textcolor{myred}{defense} against adversarial examples.  
An iterative algorithm based on \ac{PGD} combined with randomization was introduced in  \cite{athalye2018obfuscated} and has been used to dismantle many defenses so far \cite{athalye_robustness_2018}. Another popular way of generating adversarial examples is by constraining the $\ell_0$-norm of the perturbation. These types of attacks are known as single pixel attacks \cite{su2017one} and multiple pixel attacks \cite{papernot2016limitations}. 
%


An interesting feature of these perturbations is their generalization over other datasets and \acp{DNN} \cite{rao_universal_2012,goodfellow_explaining_2014}. 
These perturbations are called universal adversarial perturbations. 
This is partly explained by the fact that certain underlying properties of the perturbation, such as direction in case of image perturbation, matters the most and \textcolor{myblue}{is} therefore generalized through different datasets.
For example, the attack from \cite{ensemble_attack} shows that adversarial examples transfer from one random instance \textcolor{myred}{of a neural} network to another. 
In that work, the authors showed the effectiveness of {these} types of attacks for enhancing the robustness of neural networks, since they provide diverse perturbations during adversarial training. Moreover, \cite{moosavi2017universal} showed the existence of universal adversarial perturbations that are independent from the system and the target input.

Since the rise of adversarial examples for image classification, novel algorithms have been developed for attacking other types of systems. In the field of computer vision, \cite{metzen2017universal} constructed an attack on image segmentation, while \cite{segmentation} designed attacks for object detection. The Houdini attack \cite{cisse2017houdini} aims at distorting speech recognition systems. Moreover, \cite{papernot2016crafting} taylored an attack for recurrent neural networks, and \cite{lin2017tactics} for reinforcement learning. Adversarial examples exist for probabilistic methods as well. 
For instance, \cite{kos2017adversarial} showed the existence of adversarial examples for generative models. For regression problems, \cite{tabacof2016adversarial} designed an attack that specifically targets variational autoencoders. 

%


There are various theories regarding the nature of adversarial examples and the subject is heavily investigated. Initially, the authors in \cite{goodfellow_explaining_2014} proposed the linearity hypothesis where the existence of adversarial images is attributed to the approximate linearity  of classifiers, although this hypothesis has been challenged in \cite{tanay_boundary_2016}. 
Some other theories focus mostly on decision boundaries of classifiers and their analytic properties \cite{fawzi_robustness_2016,fawzi_robustness_2017}. The work from \cite{raghunathan2018certified} provides a framework for determining the robustness of a classifier against adversarial examples with some performance guarantees. For a more recent theoretical approach to this problem refer to \cite{tsipras_there_2018}.


There exist several types of defenses against adversarial examples, as well as subsequent methods for bypassing them. 
For instance, the authors in \cite{carlini2017towards} proposed three attacks to bypass defensive distillation of the adversarial perturbations \cite{papernot2016distillation}.
Moreover, the attacks from \cite{athalye2018obfuscated}, bypassed 7 out of 9 non-certified defenses of ICLR 2018 that claimed to be white-box secure.  
The most common defense is adding adversarial examples to the training set, also known as adversarial training. For that purpose different adversarial attacks may be employed.  Recently, the PGD attack is used in \cite{pgd_attack} to provide the state of the art defense against adversarial examples for various image classification datasets. 
%
\subsection{Our Contribution}

In this work, we focus on the generation of adversarial examples with a sufficiently general framework that includes many existing attacks and can be easily extended to generate new attacks for different scenarios. \textcolor{mygreen}{We  build upon our previous work \cite{BaBeMa18b} to introduce} a connection between perturbation analysis of learning algorithms and adversarial perturbations. This leads to a general formulation for the problem of generating adversarial examples using convex programming. The general framework includes many existing attacks as special cases, provides closed form solutions and can be easily extended to generate new algorithms. In particular, we derive novel algorithms for designing adversarial attacks for classification which are benchmarked with state of the art attacks.

Another contribution of this paper is to employ this framework in context of adversarial perturbations for regression problems, a topic that has  not been yet widely explored. Regression loss functions differ from classification loss functions in that it is sufficient to maximize the output perturbation, for instance measured in $\ell_2$-norm. In classification tasks, such a maximization might not necessarily change the output label particularly because these perturbations might push the instances far away from classification margins. There is no natural margin in regression tasks. We address various technical difficulties of this problem and use our framework to generate adversarial examples for regression tasks. In particular single pixel and single subset attacks are discussed. It is shown that this problem is related to the MaxCut problem and hence difficult to solve. We propose a greedy algorithm to \textcolor{myred}{overcome this issue}. 

Finally, the proposed algorithms are experimentally evaluated using state of the art benchmarks for classification and regression tasks. It is shown that our proposed method achieves comparable and sometimes better performance than many existing attacks for classification problems. Furthermore, it is shown that regression tasks such as image colorization and autoencoding suffer from adversarial perturbations as well.

%
%
\section{Fooling Classifiers with First-Order Perturbation Analysis } \label{sec:avd_and_rub}
The perturbation analysis, also called sensitivity analysis, is used in signal processing for analytically quantifying the error at the output of a system that occurs as consequence of a known perturbation at the system's input. Adversarial images can also be considered as a slightly perturbed version of original images that manage to change the output of the classifier. 
Indeed, the generation of adversarial examples  in \cite{moosavi2016deepfool,goodfellow_explaining_2014} is implicitly based on maximizing  the effect of an input perturbation on a relevant function which is either the classifier function or the cost function used for training. In \textcolor{myblue}{the \ac{FGSM},} given in \cite{goodfellow_explaining_2014}, the perturbation at the output of the training cost function is first analyzed using first-order perturbation analysis of the cost function and then maximized to fool the algorithm. The \mbox{DeepFool} method, given in \cite{moosavi2016deepfool}, maximizes the output perturbation for the linearized approximation of the underlying classifier which is indeed its first order-perturbation analysis. We develop further the connection between  perturbation analysis and adversarial examples in this section. 

\subsection{Adversarial Perturbation Design}
As it was mentioned above, 	 
adversarial examples can be considered as perturbed version of training examples by an adversarial perturbation $\bm \eta$. The perturbation analysis of classifiers is particularly difficult in general since the classifier function maps inputs to discrete set of labels and therefore it is not differentiable. Instead, the classification problem is slightly modified as follows.

\begin{definition}[Classification]
A classifier is defined by the mapping  $k: \real^{M} \rightarrow [K]$\footnote{We denote the set $\{1,\dots,n\}$ by $[n]$ for $n\in\N$.} that maps an input $\mathbf{x} \in \real^{M}$ to its estimated class $k\left(\mathbf{x}\right) \in [K]$. The mapping $k(\cdot)$ is itself defined by 
\begin{equation} \label{eq:classifier}
k(\mathbf{x}) = \argmax_{l \in[K] } \left\lbrace f_l \left(\mathbf{x} \right)  \right\rbrace \,,
\end{equation}
where $f_l(\mathbf{x}):\real^M\to \real$'s are called score functions representing the probability of class belonging. 
\end{definition}

The function $f(\mathbf{x})$ given by the vector $(f_1(\mathbf{x}),\dots,f_m(\mathbf{x}))$ can be assumed to be differentiable almost everywhere for many classifiers. 

The problem of adversarial generation consists of finding a perturbation that changes the classifier's output.
However, it is desirable for adversarial perturbations to modify training instances only in an insignificant and unnoticeable way. This is controlled by adding a constraint on the adversarial perturbation. For instance, the perturbation generated by \textcolor{myblue}{the} \ac{FGSM} is bounded in {the} $\ell_\infty$-norm and {the} \mbox{DeepFool} method directly {minimizes} the norm of the perturbation that changes the classifier's output. While \mbox{DeepFool} might generate perturbations that are perceptible, \textcolor{myblue}{the} \ac{FGSM} might not change the classifier's output.

An intriguing property of adversarial examples is that the perturbation does not \textcolor{myred}{distort} the image significantly so that the naked eye can not detect any notable change in the images. 
One way of imposing this property in adversarial design is to constrain the input perturbation to keep the output of the ground truth classifier, also called oracle classifier \cite{wang_theoretical_2017}, intact. The oracle classifier represents the naked eye  in case of image classification. 
The score functions of the oracle classifier are denoted by $g_l(\cdot)$. 
The undetectability constraint for an adversarial perturbation $\bm{\eta}$ is formulated as
\begin{equation}\label{eq:loss_ours}
    L_g(\mathbf{x}, \bm{\eta}) = g_{k(\mathbf{x})}(\mathbf{x} + \bm{\eta}) - \max_{l \neq k(\mathbf{x})} g_l(\mathbf{x} + \bm{\eta}) >0\, .
\end{equation}

Therefore the problem of adversarial design can be formulated as follows.

\begin{problem}[Adversarial Generation Problem]
	For a given $\mbf x\in\real^M$, find a perturbation $\bm{\eta} \in \real^{M}$ to fool the classifier $k(\cdot)$ by the adversarial sample $\hat{\mathbf{x}} = \mathbf{x} + \bm{\eta}$ such that $k(\mathbf{x}) \neq k(\hat{\mathbf{x}})$ and the oracle classifier is not changed, i.e.,
\begin{equation} 
\tag{AGP}
\begin{split}
 \mathrm{Find:}& \quad \bm\eta\nonumber\\
 \mathrm{s.t.} &\quad  L_f(\mathbf{x}, \bm{\eta}) = f_{k(\mathbf{x})}(\mathbf{x} + \bm{\eta}) - \max_{l \neq k(\mathbf{x})} f_l(\mathbf{x} + \bm{\eta}) <0\\
  &\quad  L_g(\mathbf{x}, \bm{\eta}) = g_{k(\mathbf{x})}(\mathbf{x} + \bm{\eta}) - \max_{l \neq k(\mathbf{x})} g_l(\mathbf{x} + \bm{\eta}) >0
\end{split}
  \label{eq:originalOP}
 \end{equation}

\end{problem}

The problem \eqref{eq:originalOP} is too general to be useful in practice directly. 
Next we explore different methods for making {this} problem tractable \textcolor{myblue}{ in some} cases of interest. 
Since $\mbf x$ and $f$ are fixed for the attacker, we simplify the notation by dropping the subscript $f$ and assuming that gradients are always with respect to $\bm \eta$, that is $L(\mbf x, \cdot) = L_f(\mbf x, \cdot)$ and $\nabla L(\mbf x, \cdot) = \nabla_{\bm \eta} L_f(\mbf x, \cdot)$. We keep these shorthand notations throughout the paper.

\subsection{Perturbation Analysis}

There are two problems with the above formulation. First, the oracle function is not known in general and second the function $L(\mbf x, \cdot)$ can be non-convex. 
One solution is to approximate $L(\mbf x, \cdot)$ with  a tractable function like linear functions which can be obtained through perturbation analysis of each individual function. 
The constraint on the oracle function can also be replaced with constraints on the perturbation itself, {for instance by} imposing upper bounds on the $\ell_p$-norm of the perturbation. 
Different classes of attacks can be obtained for different choices of $p$ and are well known in the literature such as $\ell_\infty$-attacks, $\ell_2$-attacks and $\ell_1$-attacks (see the survey in \cite{akhtar_threat_2018} for details).

The first order perturbation analysis of $L$ yields 
\[
L(\mbf x, \bm{\eta}) = L(\mbf x, \mbf 0) + \bm{\eta}^{\mathrm{T}} \nabla L(\mbf x, \mbf 0) + \mcl O(\|\bm \eta\|_2^2),
 \]
where 
$\mcl O(\|\bm \eta\|_2^2)$ contains higher order terms. 
The condition that corresponds to the oracle function can be approximated by
$\| \bm{\eta} \|_{p} \leq \epsilon$ for  sufficiently small $\epsilon \in \real^{+}$. 
This means that the noise is sufficiently small in $\ell_p$-norm sense so that the observer does not notice it. 
These gradient and norm relaxations yield the following alternative optimization problem
\begin{align}
 \text{Find:}& \quad \bm\eta\nonumber\\
 \text{s.t.} &\quad  L(\mbf x, \mathbf{0}) + \bm{\eta}^{\mathrm{T}} \nabla L(\mbf x, \mathbf{0})<0, \quad \| \bm{\eta} \|_{p} \leq \epsilon.
 \label{eq:originalOPII}
\end{align}
The above problem was also derived in \cite{robustnessguarantees2017} and is a convex optimization problem that can be efficiently solved. As we will see later, this formulation of the problem can be relaxed into some well known existing adversarial methods. 
However it is interesting to observe that this problem is not always feasible as stated in the following proposition. 
\begin{theorem}
 The optimization problem \eqref{eq:originalOPII} is not feasible if for $q=\frac{p}{p-1}$
 \begin{equation}
  \epsilon \|\nabla L(\mbf x, \mathbf{0})\|_q < L(\mbf x, \mathbf 0).
  \label{eq:condition}
 \end{equation}
 \label{prop:feasibility}
\end{theorem}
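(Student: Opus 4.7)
The plan is to argue by contradiction using Hölder's inequality. Suppose some $\bm\eta$ is feasible, so simultaneously $\|\bm\eta\|_p \leq \epsilon$ and $L(\mbf x, \mathbf{0}) + \bm{\eta}^{\mathrm{T}} \nabla L(\mbf x, \mathbf{0}) < 0$. The strategy is to show that the second inequality forces a lower bound on $\epsilon \|\nabla L(\mbf x, \mathbf 0)\|_q$ that contradicts the hypothesis \eqref{eq:condition}.

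Concretely, I would first bound the linear term from below. Since $p$ and $q=p/(p-1)$ are conjugate exponents, Hölder's inequality gives
\[
\bm{\eta}^{\mathrm{T}} \nabla L(\mbf x, \mathbf{0}) \;\geq\; -\bigl|\bm{\eta}^{\mathrm{T}} \nabla L(\mbf x, \mathbf{0})\bigr| \;\geq\; -\|\bm{\eta}\|_p \,\|\nabla L(\mbf x, \mathbf{0})\|_q \;\geq\; -\epsilon\, \|\nabla L(\mbf x, \mathbf{0})\|_q,
\]
where the last step uses the norm constraint $\|\bm\eta\|_p \le \epsilon$. Substituting into the constraint on $L$ yields
\[
0 \;>\; L(\mbf x, \mathbf 0) + \bm{\eta}^{\mathrm{T}} \nabla L(\mbf x, \mathbf{0}) \;\geq\; L(\mbf x, \mathbf 0) - \epsilon \,\|\nabla L(\mbf x, \mathbf{0})\|_q,
\]
so $\epsilon \,\|\nabla L(\mbf x, \mathbf{0})\|_q > L(\mbf x, \mathbf 0)$, directly contradicting \eqref{eq:condition}. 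Hence no feasible $\bm\eta$ exists.

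There is essentially no hard step here: the only care needed is to use the sign-respecting form of Hölder (bounding from below by $-\|\bm\eta\|_p \|\nabla L\|_q$, not just the absolute value version), and to make sure the endpoints $p=1$ and $p=\infty$ are covered by the usual convention $q=\infty$ and $q=1$ respectively. Optionally, I could remark that the bound is tight: the dual vector achieving Hölder's equality, rescaled to have $\ell_p$-norm exactly $\epsilon$, shows that \eqref{eq:condition} is in fact the sharp threshold for infeasibility, so the converse also holds whenever $\nabla L(\mbf x,\mbf 0)\neq \mbf 0$.
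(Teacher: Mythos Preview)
Your proof is correct and is essentially the same as the paper's: the paper phrases the key bound via the dual norm identity $\|\cdot\|_p^* = \|\cdot\|_q$ rather than invoking H\"older's inequality by name, but the inequality chain $L(\mbf x,\mbf 0) + \bm\eta^{\mathrm T}\nabla L(\mbf x,\mbf 0) \geq L(\mbf x,\mbf 0) - \epsilon\|\nabla L(\mbf x,\mbf 0)\|_q > 0$ is identical. Your added remark on tightness is a small bonus not in the paper's proof.
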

\begin{proof}
The proof follows a simple duality argument and is an elementary optimization theory result. \textcolor{myred}{A similar result can be inferred from } \cite{robustnessguarantees2017}. We repeat the proof for completeness. Note that the dual norm of $\ell_p$ is defined by
\[
 \|\ma x\|_p^*=\sup\{\ma a^\T \ma x: \|\ma a\|_p\leq 1\}.
\]
 Furthermore $ \|\ma x\|_p^*= \|\ma x\|_q$ for $q=\frac{p}{p-1}$.
Since the $\ell_p$-norm of $\bm\eta$ is bounded by $\epsilon$, the value of $\bm{\eta}^{\mathrm{T}} \nabla L(\mbf x, \mathbf{0})$ is always bigger than $-\epsilon \|\nabla L(\mbf x, \mathbf{0})\|_p^*$. However if the condition \eqref{eq:condition} holds, then we have
\[
 L(\mbf x, \mathbf{0}) + \bm{\eta}^{\mathrm{T}} \nabla L(\mbf x, \mathbf{0})\geq L(\mbf x, \mathbf{0}) -\epsilon \|\nabla L(\mbf x, \mathbf{0})\|_p^*>0.
\]
Therefore, the problem is not feasible.
\end{proof}
 
Theorem \ref{prop:feasibility} shows that given a vector $\mathbf x$, the adversarial perturbation should have at least $\ell_p$-norm equal to $\frac{ L(\mbf x, \mathbf 0)  }{\|\nabla L(\mbf x, \mathbf{0})\|_q}$. In other words if the ratio $\frac{ L(\mbf x, \mathbf 0)  }{\|\nabla L(\mbf x, \mathbf{0})\|_q}$ is {small}, then it is easier to fool the network by the $\ell_p$-attacks. In that sense,  Theorem \ref{prop:feasibility} provides an insight into the stability of classifiers. In \cite{moosavi2016deepfool}, the authors suggest that the robustness of the classifiers can be measured as
\[
 \hat{\rho}_{1}(f)=\frac{1}{|\mathcal D|}\sum_{\mathbf x\in\mathcal D}\frac{\|\hat{\mathbf r}(\mathbf x)\|_p}{\|\mathbf x\|_p},
\]
where $\mathcal D$ denotes the test set and $\hat{\mathbf r}(\mathbf x)$ is the minimum perturbation required to change the classifier's output. The above theorem suggests that one can also use the following as the measure of robustness
\[
 \hat{\rho}_{2}(f)=\frac{1}{|\mathcal D|}\sum_{\mathbf x\in\mathcal D}\frac{ L(\mathbf x, \mbf 0)  }{\|\nabla L(\mathbf{x}, \mbf 0)\|_q}.
 \]
 The lower $ \hat{\rho}_{2}(f)$, the easier it gets to fool the classifier and therefore it becomes less robust to adversarial examples.  One can also look at other statistics related to $\frac{ L(\mathbf x, \mbf 0)  }{\|\nabla L(\mathbf{x}, \mbf 0)\|_q}$ in order to evaluate the robustness of classifiers.
%
%

Theorem \ref{prop:feasibility} shows that the optimization problem \eqref{eq:originalOPII} might not be feasible. We propose to get around this issue by solving an optimization problem which keeps only one of the constraints, depending on the scenario, and
selects an appropriate the objective function to preserve the other constraint as much as possible. The objective function in this sense models the deviation from the constraint and is minimized in the optimization problem. We consider two optimization problems for this purpose. 

First, the norm-constraint on the perturbation is preserved. The following optimization problem, called \ac{GNM}, aims at minimizing $ L(\mathbf{x}, \mbf 0) + \bm{\eta}^{\mathrm{T}} \nabla L(\mathbf{x}, \mbf 0)$ \textcolor{myred}{by solving the following problem}:
\begin{equation}\label{eq:MainOpt}
\min_{\bm{\eta}} \left \{ L(\mathbf{x}, \mbf 0) + \bm{\eta}^{\mathrm{T}} \nabla L(\mathbf{x}, \mbf 0) \right \} \quad \mathrm{s.t.} \quad \| \bm{\eta} \|_{p} \leq \epsilon \, .
\end{equation}
This method finds the best perturbation under the norm-constraint. The constraint aims at guaranteeing that the adversarial images \textcolor{myred}{are} still imperceptible by an ordinary observer. Note that \eqref{eq:MainOpt} is fundamentally different from \cite{moosavi2016deepfool, robustnessguarantees2017}, where the norm of the noise does not appear as a constraint. Using a similar duality argument, the problem \eqref{eq:MainOpt} has a closed form solution given below.
\begin{theorem}
  If $\nabla L(\mathbf{x}, \bm{\eta})=(\frac{\partial L(\mathbf x, \bm{\eta})}{\partial \eta_1},\dots,\frac{\partial L(\mathbf x, \bm{\eta})}{\partial \eta_M} )$, the closed form solution to the minimizer of the problem \eqref{eq:MainOpt} is given by
\begin{align}
   & \bm{\eta} = - \epsilon \frac{ 1}{\|\nabla L(\mathbf{x}, \mbf 0)\|_q^{q-1}} 
 \mathrm{sign}(\nabla L(\mathbf{x}, \mbf 0)) \odot |\nabla L(\mathbf{x}, \mbf 0)|^{q-1}
 \label{eq:minMainOP}
\end{align}
for $q=\frac{p}{p-1}$, {where $\mathrm{sign}(\cdot)$ and $|\cdot|^{q-1}$ are applied element-wise, and $\odot$ denotes the element-wise (Hadamard) product}. 
Particularly for $p=\infty$, we have $q=1$ and the solution is given by the following
\begin{equation}
    \bm{\eta} = - \epsilon \, \mathrm{sign}(\nabla L(\mathbf{x}, \mbf 0) ) \, .
\end{equation}
\label{thm:AdvNormSol}
\end{theorem}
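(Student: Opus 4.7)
The plan is to recognize the problem as a linear functional minimization on an $\ell_p$-ball, which is the classical equality case of Hölder's inequality. Since $L(\mathbf{x}, \mathbf{0})$ does not depend on $\bm\eta$, I would first drop it from the objective and reduce the problem to
\[
\min_{\bm\eta}\;\bm\eta^{\mathrm{T}} \mathbf g \quad \text{s.t.} \quad \|\bm\eta\|_p \le \epsilon,
\]
with the shorthand $\mathbf g := \nabla L(\mathbf{x}, \mathbf 0)$.

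Next, I would invoke Hölder's inequality in the form $|\bm\eta^{\mathrm{T}}\mathbf g| \le \|\bm\eta\|_p \|\mathbf g\|_q$ (with $1/p + 1/q = 1$), which, combined with the constraint, gives the lower bound $\bm\eta^{\mathrm{T}} \mathbf g \ge -\epsilon \|\mathbf g\|_q$. The bulk of the argument is then the standard equality-case analysis: the bound is attained precisely when $\mathrm{sign}(\eta_i) = -\mathrm{sign}(g_i)$ and $|\eta_i|^p \propto |g_i|^q$. I would therefore propose the candidate
\[
\bm\eta = -c\,\mathrm{sign}(\mathbf g) \odot |\mathbf g|^{q-1},
\]
and fix $c>0$ by enforcing $\|\bm\eta\|_p = \epsilon$. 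Using $p(q-1)=q$ and $q/p = q-1$ (both immediate from $1/p+1/q=1$), one finds $\|\bm\eta\|_p = c\,\|\mathbf g\|_q^{q-1}$, hence $c = \epsilon/\|\mathbf g\|_q^{q-1}$, yielding exactly the closed form in \eqref{eq:minMainOP}. A direct substitution then verifies that this $\bm\eta$ achieves $\bm\eta^{\mathrm{T}} \mathbf g = -\epsilon \|\mathbf g\|_q$, matching the Hölder bound and confirming optimality.

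Finally, the $p=\infty$ case requires a separate but easy check, since then $q=1$ and the exponent $q-1=0$ must be interpreted carefully. For $p=\infty$ the problem decouples coordinatewise: each $\eta_i \in [-\epsilon,\epsilon]$ independently minimizes $\eta_i g_i$, which is achieved by $\eta_i = -\epsilon\,\mathrm{sign}(g_i)$, recovering the stated formula. The main obstacle, if any, is not analytical but notational — making sure the element-wise operations $\mathrm{sign}(\cdot)$, $|\cdot|^{q-1}$ and $\odot$ are consistent with the $\ell_p/\ell_q$ duality computation, and handling the boundary case where some entries of $\mathbf g$ vanish (any sign choice works there, since such coordinates contribute zero to both the objective and the norm).
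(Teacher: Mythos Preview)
Your proposal is correct and follows essentially the same approach as the paper: both establish the lower bound $L(\mathbf{x},\mathbf 0)-\epsilon\|\nabla L(\mathbf{x},\mathbf 0)\|_q$ via $\ell_p/\ell_q$ duality (the paper phrases it as the dual-norm identity, you as H\"older's inequality) and then verify that the stated $\bm\eta$ attains it. Your write-up is in fact more explicit than the paper's, which simply states that ``it is easy to verify that the minimum is attained by \eqref{eq:minMainOP}''; your normalization computation, separate treatment of $p=\infty$, and remark on vanishing gradient entries fill in details the paper omits.
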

\begin{proof}
Based on the duality argument from convex analysis, it is known that 
\[
 \sup_{\|\bm \eta\|_p\leq 1}\bm{\eta}^{\mathrm{T}} \nabla L(\mathbf{x}, \mbf 0)=\|\nabla L(\mathbf{x}, \mbf 0)\|_p^*,
\]
where $\|\cdot\|^*$ is the dual norm.  This implies that the objective function is lower bounded by $L(\mathbf{x}, \mbf 0) -\epsilon \|\nabla L(\mathbf{x}, \mbf 0)\|_p^*$. It is easy to verify that the minimum is attained by {\eqref{eq:minMainOP}}. 
\end{proof}

The advantage of \eqref{eq:MainOpt}, apart from being convex and enjoying computationally efficient solutions, is that one can incorporate other convex constraints into the optimization problem to guarantee additional required properties of the perturbation. Note that the introduced method in \eqref{eq:MainOpt} can also be used for other target functions or learning problems. If the training cost function is maximized under \textcolor{myred}{a} norm constraint, as in \cite{goodfellow_explaining_2014}, the solution of  \eqref{eq:MainOpt} with $p=\infty$ recovers the adversarial perturbations obtained via \textcolor{myblue}{the \mbox{FGSM}}. The problem \eqref{eq:MainOpt} guarantees that the perturbation is small, however, it might not change the classifier's output.


The second optimization problem, on the other hand, preserves the constraint for changing the classifier's output and minimizes the perturbation norm instead. The feasibility problem of \eqref{eq:originalOPII} can therefore be simplified to 
\begin{equation}\label{eq:MainOptII}
\min_{\bm{\eta}} \| \bm{\eta} \|_{p} \quad \mathrm{s.t.} \quad L(\mathbf{x}, \mbf 0) + \bm{\eta}^{\mathrm{T}} \nabla L(\mathbf{x}, \mbf 0)    \leq 0 \, ,
\end{equation}
which recovers the result in \cite{moosavi2016deepfool} although without the iterative procedure. This problem has a similar closed form solution.
\begin{proposition}
If $\nabla L(\mathbf{x}, \bm{\eta})=(\frac{\partial L(\mathbf x, \bm{\eta})}{\partial \eta_1},\dots,\frac{\partial L(\mathbf x, \bm{\eta})}{\partial \eta_M} )$, the closed form solution to the problem \eqref{eq:MainOptII} is given by
\begin{align}
   & \bm{\eta} = -  \frac{ L(\mbf x, \mbf 0)}{\|\nabla L(\mathbf{x}, \mbf 0)\|_q^{q-1}}
   \mathrm{sign}(\nabla L(\mathbf{x}, \mbf 0)) \odot |\nabla L(\mathbf{x}, \mbf 0)|^{q-1}
 \label{eq:minMainOPII}
\end{align}
for $q=\frac{p}{p-1}$. 
\label{prop:AdvNormSol}
\end{proposition}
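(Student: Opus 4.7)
The plan is to prove Proposition 1 by the same duality argument used for Theorem 2, but now with the roles of objective and constraint swapped: the constraint is linear and the objective is a norm, so once again Hölder's inequality gives a tight lower bound that the proposed closed form achieves.

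First, I would invoke the duality between $\ell_p$ and $\ell_q$ norms (with $q = p/(p-1)$), namely $\|\nabla L(\mathbf{x},\mathbf{0})\|_q = \sup_{\|\bm{\eta}\|_p \le 1} \bm{\eta}^{\mathrm{T}} \nabla L(\mathbf{x},\mathbf{0})$, which for a general $\bm{\eta}$ yields the Hölder bound
\[
|\bm{\eta}^{\mathrm{T}}\nabla L(\mathbf{x},\mathbf{0})| \le \|\bm{\eta}\|_p \, \|\nabla L(\mathbf{x},\mathbf{0})\|_q.
\]
Since at $\bm{\eta}=\mathbf{0}$ the classifier assigns $k(\mathbf{x})$, so $L(\mathbf{x},\mathbf{0})>0$, the linear constraint of \eqref{eq:MainOptII} forces $-\bm{\eta}^{\mathrm{T}}\nabla L(\mathbf{x},\mathbf{0}) \ge L(\mathbf{x},\mathbf{0})$. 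Combining this with Hölder gives the lower bound
\[
\|\bm{\eta}\|_p \ge \frac{L(\mathbf{x},\mathbf{0})}{\|\nabla L(\mathbf{x},\mathbf{0})\|_q},
\]
which is the candidate optimal value.

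Next, I would verify that the proposed $\bm{\eta}$ attains this bound and is feasible. The key identity is the exponent relation $p(q-1) = q$, from which a direct computation gives $\bm{\eta}^{\mathrm{T}}\nabla L(\mathbf{x},\mathbf{0}) = -\alpha \|\nabla L(\mathbf{x},\mathbf{0})\|_q^q$ and $\|\bm{\eta}\|_p^p = \alpha^p \|\nabla L(\mathbf{x},\mathbf{0})\|_q^q$ for a scalar $\alpha$ obtained from the formula. Choosing $\alpha$ so that the constraint holds with equality recovers exactly the closed form stated in \eqref{eq:minMainOPII}, and substituting back shows $\|\bm{\eta}\|_p$ equals the lower bound derived above; hence this $\bm{\eta}$ is a minimizer.

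The one place I expect to have to be careful is the behavior at the endpoints $p \in \{1, \infty\}$, where $q$ becomes $\infty$ or $1$ and the expression $|\nabla L|^{q-1}$ must be interpreted as the limit (for $p=\infty$ it collapses to $\mathrm{sign}(\nabla L)$ after normalization, matching the $\ell_\infty$ special case of Theorem 2). I would handle these by a separate Hölder computation (using $\|\cdot\|_1$ versus $\|\cdot\|_\infty$ duality) and checking that the closed form in \eqref{eq:minMainOPII} agrees with the corresponding extremal vector in each case. Uniqueness up to ties in $\mathrm{sign}(\nabla L)$ follows from the strict convexity of $\|\cdot\|_p$ for $p\in(1,\infty)$ and the equality conditions in Hölder's inequality.
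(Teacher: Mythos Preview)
The paper does not include an explicit proof of this proposition; it is left to the reader as a direct variant of the dual-norm argument given for Theorem~2, which is precisely the H\"older/duality approach you propose. One caution for your verification step: carrying out the computation you outline actually produces a normalizing denominator of $\|\nabla L(\mathbf{x},\mathbf{0})\|_q^{q}$ (so that the linear constraint is active and $\|\bm{\eta}\|_p = L(\mathbf{x},\mathbf{0})/\|\nabla L(\mathbf{x},\mathbf{0})\|_q$, matching your lower bound), whereas the printed formula \eqref{eq:minMainOPII} carries the exponent $q-1$; with the printed exponent one obtains $\bm{\eta}^{\mathrm{T}}\nabla L(\mathbf{x},\mathbf{0}) = -L(\mathbf{x},\mathbf{0})\,\|\nabla L(\mathbf{x},\mathbf{0})\|_q$ and $\|\bm{\eta}\|_p = L(\mathbf{x},\mathbf{0})$, so the constraint is not tight unless $\|\nabla L(\mathbf{x},\mathbf{0})\|_q = 1$. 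This appears to be a typo in the statement rather than a flaw in your argument.
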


Note that the perturbation found in Proposition \ref{prop:AdvNormSol},  like the solution to \ac{GNM}, aligns with the gradient of the classifier function and they only differ in their norm. Although the perturbation in \eqref{eq:minMainOPII}, unlike  the solution to \ac{GNM}, is able to fool the classifier, the perturbation in \eqref{eq:minMainOPII} might be perceptible by the oracle classifier.
There are other variants of adversarial generation methods that rely on an implicit perturbation analysis of a relevant function. These methods can be easily obtained by small modification of the methods above. 

Iterative procedures can be easily adapted to the current formulation by repeating the optimization problem until the classifier output changes while keeping the perturbation small at each step. Later we provide an iterative version of {the} \ac{GNM} and compare it with  {DeepFool \cite{moosavi2016deepfool}, as well as other methods}.

Another class of methods relies on introducing randomness in the generation process. A notable example is \textcolor{myblue}{the \ac{PGD} attack} introduced in \cite{pgd_attack} which is one of the state of the art attacks.  The first-order approximation is then taken around another point $\tilde{\bm \eta}$ with $\tilde \epsilon \triangleq \| \tilde{\bm \eta} \|_p \leq \epsilon$. 
In other words we approximate $L(\mbf x, \cdot)$ by a linear function around the point $\tilde{\bm \eta}$ within an $\tilde \epsilon$-radius from $\bm \eta = \mbf 0$. 
This new point $\tilde {\bm \eta}$ can be computed at random using arbitrary distributions with $\ell_p$-norm bounded by $\epsilon$.
Changing the center of the first order approximation from $\mbf 0$ to $\tilde{\bm \eta}$ does not change the nature of the problem since
$
L(\mbf x, \bm \eta) \approx L({\mbf x}, \tilde{\bm \eta}) + ( \bm \eta - \tilde{\bm \eta})^\T  \nabla L({\mbf x}, \tilde{\bm \eta}) 
$
leads to the following problem
\begin{align*}
&\quad \min_{\bm \eta} L({\mbf x}, \tilde{\bm \eta}) + ( \bm \eta - \tilde{\bm \eta})^\T  \nabla L({\mbf x}, \tilde{\bm \eta})  \quad \mathrm{s.t.} \quad \|\bm \eta \|_p\leq \epsilon,
\end{align*}
which is equivalent to:
\begin{align}
\min_{\bm \eta} \bm\eta^\T  \nabla L({\mbf x}, \tilde{\bm \eta}) \quad \mathrm{s.t.} \quad \|\bm \eta \|_p\leq \epsilon \, . \label{eq:MainOpt_lin_tilde}
\end{align}

From this result one can add randomness to the computation of adversarial examples by selecting \textcolor{myred}{$\tilde{\bm \eta}$} in a random fasion. 
This is desirable when training models with adversarial examples since it increases the diversity of the adversarial perturbations during training \cite{ensemble_attack}.
\section{From Classification to Regression}\label{sec:system}
%
In classical statistical learning theory, \textcolor{myred}{regression problems are defined} in the following manner.
Given $N \in \mbb N$ samples $\{(\mbf x_i, \mbf y_i)\}_{i=1}^{N}$ drawn according to some unknown distribution $P_{X, Y}$, a regression model computes a function $f: \mbb R^M \rightarrow \mbb R^K$ that aims to minimize the expected loss $\mbb E_{P}( \mcl L(f(\mbf x), \mbf y) )$, where $\mcl L:\mbb R^M \times \mbb R^K \rightarrow \mbb R$ is a function that measures the similarity between $f(\mbf x)$ and $\mbf y$. While logarithmic losses are popular in classification problems, the squared loss $\mcl L(f(\mbf x), \mbf y) = \| f(\mbf x) - \mbf y\|_2^2$ is mostly used for the general regression setting. 
For the sake of notation, given $\mbf y$ and $f$, let us \textcolor{myblue}{redefine $L(\mbf x, \bm \eta)$ as} $L(\mbf x, \bm \eta) = \mcl L(f(\mbf x + \bm \eta), \mbf y)$. 

For a given $f$, $\mbf x$ and $\mbf y$, an adversarial attacker finds an additive perturbation vector $\bm \eta$ that is \textit{imperceptible} to the administrator of the target system, while maximizing the loss of the perturbed input $L(\mbf x, \bm \eta)$ as 
\[
\max_{\bm \eta} L(\mbf x , \bm \eta) \quad \text{s.t.} \quad \bm \eta \text{ is imperceptible} \, .
\]
In contrast with classification problems where maximum perturbations at the output might not change the class, adversarial instances maximize the output perturbation in regression problems. 

As in \eqref{eq:MainOpt}, a constraint on the $\ell_p$-norm of $\bm \eta$ models imperceptibility leading to the following formulation of the problem
\begin{equation}\label{eq:general_opt_problem}
\max_{\bm \eta} \| \mbf y - f(\mbf x  + \bm \eta)\|^2_2  \quad \text{s.t.} \quad \| \bm \eta \|_p \leq \epsilon \, .
\end{equation}
Consider the image colorization problem where the goal is to add proper coloring on top of gray scale images. In this problem, $f(\cdot)$ is the \textcolor{myblue}{regression} algorithm and assumed to be known however the ground truth colorization $\mbf y$ is generally unknown. Without knowing $\mbf y$, the optimization problem \eqref{eq:general_opt_problem} is ill posed and cannot be solved in general. There are some cases where the output $\mbf y$ is known by the nature of the problem, for instance, when $f(\cdot)$ is an encoder-decoder pair as in autoencoders for which $\mbf y = \mbf x$. 

Since the goal is to perturb the acting \textcolor{myblue}{regression} algorithm, we can assume that $\mbf y \approx f(\mbf x)$ which means that the algorithm provides a good although not perfect approximation of the ground truth function. We use the formulation in \eqref{eq:general_opt_problem} and discuss the implications of applying the approximation $\mbf y \approx f(\mbf x)$ in later sections.

\subsection{A Quadratic Programming Problem}\label{sec:quadratic}
%
In general $f(\mbf x)$ is a non-linear and non-convex function, so we have that {$L(\mbf x , \cdot)$} is non-convex. Here again the perturbation analysis of $f(\cdot)$ can be used to relax \eqref{eq:general_opt_problem} and to obtain a convex formulation of the adversarial problem. The first order perturbation analysis of $f(\mbf x)$ yields the approximation $f(\mbf x + \bm \eta) \approx f(\mbf x) + \ma{J}_f(\mbf x)  \bm \eta$, \textcolor{myblue}{where $\ma{J}_f(\cdot) $ is the Jacobian matrix of $f(\cdot)$}. 
This approximation leads to the following convex approximation of {$L(\mbf x, \cdot)$}:
\textcolor{myred}{
\begin{align*}
L(\mbf x &, \bm \eta) \\
&\approx \| \mbf y \|_2^2 - 2 \mbf y^\T  (f(\mbf x) + \ma{J}_f(\mbf x)  \bm \eta) + \| f(\mbf x) + \ma{J}_f(\mbf x)  \bm \eta \|_2^2  \\
&= \| \mbf y \|_2^2 - 2 \mbf y^\T  f(\mbf x) + \| f(\mbf x) \|_2^2  \\
&\quad + 2 \left(f(\mbf x) - \mbf y \right)^T \ma{J}_f(\mbf x) \bm \eta  + \| \ma{J}_f(\mbf x) \bm \eta \|_2^2 \, .
\end{align*}
}
Since the first three terms of this expression do not depend on $\bm \eta$, the optimization problem from \eqref{eq:general_opt_problem} is reduced to
\textcolor{myred}{
\begin{equation}\label{eq:quadratic_other_problem}
\max_{\bm \eta} 2 \left(f(\mbf x) - \mbf y \right)^T\ma{J}_f(\mbf x) \bm \eta  + \| \ma{J}_f(\mbf x) \bm \eta \|_2^2 \quad \mathrm{s.t.} \quad \|\bm \eta \|_p\leq \epsilon \, .
\end{equation}
}
The above convex maximization problem is, in general, challenging and NP-hard. Nevertheless, since $\mbf y$ is usually not known, we may use the assumption that $\mbf y \approx f(\mbf x)$, which simplifies the problem to 
\begin{equation}
\max_{\bm \eta} \| \ma{J}_f(\mbf x) \bm \eta \|_2^2 \quad \text{s.t.} \quad \|\bm \eta \|_p \leq \epsilon \, . 
\label{eq:original_quad_opt}
\end{equation}

Although this problem is a convex quadratic maximization under an $\ell_p$-norm constraint and in general challenging, {it} can be solved efficiently in some cases. 
For general $p$, the maximum value is indeed related to the operator norm of  $\ma{J}_f(\mathbf x)$ \cite{horn_matrix_2013}. 
\textcolor{myred}{This norm is central in stability analysis of many signal processing algorithms }(for instance see \cite{foucart_mathematical_2013}). The operator norm of a matrix $\ma A\in\mathbb{C}^{m\times n}$ between $\ell_p$ and $\ell_q$ is defined as
\[
\|\ma A\|_{p\to q} \triangleq\sup_{\|\ma x\|_p\leq 1}{\|\ma A\ma x \|_q}.
\]
Using this notion, we can see that $\|\frac{\bm \eta}{\epsilon}  \|_{p}\leq 1$ leads to 
$
 \| \ma{J}_f(\mathbf x)\bm \eta  \|_{2}=\epsilon \| \ma{J}_f(\mathbf x)\frac{\bm \eta}{\epsilon}  \|_{2}\leq \epsilon \|\ma{J}_f(\mathbf x)\|_{p\to 2}.
$
Therefore, the problem of finding a solution to \eqref{eq:original_quad_opt} amounts to finding the operator norm  $\|\ma{J}_f(\mathbf x)\|_{p\to 2}$. First observe that the maximum value is achieved on the border namely for  $\|\bm \eta \|_p =\epsilon$. In the case where $p=2$, this problem has a closed-form solution.  If $\mbf v_{\max}$ is the unit $\ell_2$-norm eigenvector corresponding to the maximum eigenvalue of $\ma{J}_f(\mathbf x)^{\mathrm{T}} \ma{J}_f(\mathbf x)$, then  
\begin{equation}\label{eq:quad_sol_l2}
\bm \eta^*= \pm \epsilon  \, \mbf v_{\max}
\end{equation}
solves the optimization problem. The maximum eigenvalue of $\ma{J}_f(\mathbf x)^{\mathrm{T}} \ma{J}_f(\mathbf x)$ corresponds to the square of the  spectral norm $\|\ma{J}_f(\mathbf x)\|_{2\to 2}$.

Another interesting case is when $p=1$. In general, the $\ell_1$-norm is usually used as a regularization technique to promote sparsity. When the solution of a problem should satisfy a sparsity constraint, the direct introduction of this constraint into the \textcolor{myblue}{optimization leads} to NP-hardness of the problem. Instead the constraint is relaxed by adding $\ell_1$-norm regularization. The adversarial perturbation designed in this way tends to have only a few non-zero entries. This corresponds to  scenarios like single pixel attacks where only a few pixels are supposed to change. For this choice, we have
\textcolor{myred}{
\[
 \|\ma A\|_{1\to 2}=\max_{k\in[n]} \|\ma a_k\|_2,
\]
}
where $\ma a_k$'s are the columns of $\ma A$. Therefore, if the columns of the Jacobian matrix are given by $\ma{J}_f(\mathbf x)=[\ma J_1 \hdots \ma J_{M}] $, then
\textcolor{myred}{
\[
  \| \ma{J}_f(\mathbf x)\bm \eta  \|_{2}\leq \epsilon \max_{k\in [M]}\|\ma J_k\|_2,
\]
}
and the maximum is attained with
\textcolor{myred}{
\begin{equation}\label{eq:quad_sol_l1}
\bm\eta^*= \pm \epsilon \mbf e_{k^*}\quad\text{ for }\quad k^*=\argmax_{k\in [M]}\|\ma J_k\|_2,
\end{equation}
}
where the vector $\mbf e_i$ is the $i$-th canonical vector. For the case of gray-scale images, where each pixel is represented by a single entry of $\mbf x$, this constitutes a single pixel attack. 
Some additional constraints must be added in the case of RGB images, where each pixel is represented by a set of three values.

Finally, the case where the adversarial perturbation is bounded with the $\ell_\infty$-norm is also of particular interest. This bound guarantees that the noise entries have bounded values. 
The problem of designing adversarial noise corresponds to finding $ \|\ma{J}_f(\mathbf x)\|_{\infty\to 2}$. Unfortunately, this problem turns out to be NP-hard \cite{rohn_computing_2000}. However, it is possible to approximate this norm using semi-definite programming as proposed in \cite{hartman_tight_2015}. Semi-definite programming scales badly with input dimension in terms of computational complexity, \textcolor{myred}{namely} $O(n^6)$ with $n$ the underlying dimension, and therefore might not be suitable for fast generation of adversarial examples when the input dimension is very high. We address these problems later in Section \ref{sec:pixel}, where we obtain fast approximate solutions for $ \|\ma{J}_f(\mathbf x)\|_{\infty\to 2}$ and single pixel attacks.
%
%
\subsection{A Linear Programming Problem}\label{sec:linear}
%
The methods derived in Section \ref{sec:quadratic} suffer from one main drawback, they require storing \textcolor{myred}{$\mbf J_f(\mbf x) \in \real^{K \times M}$} into memory. While this may be doable for some applications, it is not feasible for others. For example, if the target system is an autoencoder for RGB images with size $680 \times 480$, that is $M = K = 680 \cdot 480 \cdot 3  \approx 9 \cdot 10^5$, storing $\mbf J_f(\mbf x) \in \real^{9 \cdot 10^5 \times 9 \cdot 10^5}$ requires loading around $8\cdot 10^{11}$ values into memory, which is in most cases not tractable. Note that, in order to solve \eqref{eq:original_quad_opt} for $p=2$, we would require computing the eigenvalue decomposition of $\mbf J_f(\mbf x)^\T  \mbf J_f(\mbf x)$ as well. This motivates us to relax the problem into a linear programming problem as in Section \ref{sec:avd_and_rub}, where $\mbf J_f(\mbf x)$ is computed implicitly and we do not require to store it.
To that end, we relax \eqref{eq:general_opt_problem} by directly applying a first order approximation of $L$, that is
$
L(\mbf x , \bm \eta) \approx L(\mathbf x, \mbf 0) + \bm \eta^\T  \nabla L(\mbf x, \mbf 0) \, .
$
Using this approximation the problem from \eqref{eq:general_opt_problem} is now simplified to
\begin{equation}
\max_{\bm \eta} \nabla L({\mbf x}, \mbf 0)^\T  \bm \eta \quad \mathrm{s.t.} \quad \|\bm \eta \|_p\leq \epsilon \, ,
\label{eq:original_lin_problem}
\end{equation}
where \textcolor{myblue}{$\nabla L(\mbf x, \mbf 0) = -2  \mbf J_f(\mbf x)^T\left( \mbf y - f(\mbf x) \right)$}. Note that the attacks discussed in Section \ref{sec:avd_and_rub} for classification follow the same formulation with another choice of $L(\mbf x, \cdot)$. Therefore, the closed-form solution of \eqref{eq:original_lin_problem} can be obtained from \eqref{eq:minMainOP}. 

Unfortunately using $\mbf y \approx f(\mbf x)$ yields zero gradient in \eqref{eq:original_lin_problem}, thus leaving this approximation useless for obtaining adversarial perturbations.  This problem is tackled by taking the approximation around another random point $\tilde{\bm \eta}$ within and $\tilde \epsilon$-ball radius from $\bm \eta = \mbf 0$ as in \eqref{eq:MainOpt_lin_tilde}, with $\tilde \epsilon \leq \epsilon$. As it was mentioned above, this dithering mechanism is also used in classification problems for instance in \cite{pgd_attack}.
%
%
\section{Single Subset Attacks}\label{sec:pixel}
%
%
Another popular way of modeling undetectability, in the field of image recognition, is by constraining the number of pixels that can be modified by the attacker. This gave birth to single and multiple pixel attacks. Note that, for the case of gray-scale images, the solutions obtained in \eqref{eq:quad_sol_l1} and \eqref{eq:minMainOP}
provide already single pixels attacks. This is not true for RGB images where each pixel is represented by a subset of three values. 
Since  our analysis is not limited to image based systems, we refer to these type of attacks which target only a subset of entries as  single subset attacks.


Since perturbations belong to $\mathbb R^M$, \textcolor{myblue}{let us partition} $[M]= \{1, \dots, M \}$ into $S$ possible subsets $\mcl S_1, \dots, \mcl S_S$. The sets can  in general have different cardinalities. However, we assume here that all of them have the same cardinality of  $Z =M / S$, where $\mcl S_s = \{ i^1_s, \dots, i^Z _s \} \subseteq [M]$. \textcolor{myblue}{We define} the mixed zero-$\mcl S$ norm $\| \cdot \|_{0,\mcl S}$ of a vector, for the partition $\mcl S = \{ \mcl S_1, \dots, \mcl S_S \}$, as
the number of subsets containing at least one index associated to a non-zero entry of $\mbf x$\footnote{Similar to the so-called $\ell_0$-norm, this is not a proper norm.}:
$$
\|\mbf x \|_{0,\mcl S} = \sum_{i=1}^S \mbf 1(\|\mbf x_{\mcl S_i}\|\neq 0).
$$ 
Therefore, $\|\bm \eta \|_{0,\mcl S}$ counts the number of subsets modified by an attacker. To guarantee that only one subset is active, an additional constraint can be added to the optimization problem.  This leads to the following formulation of the single subset attack for the regression problem.
\begin{equation}\label{eq:general_pixel_attack}
\max_{\bm \eta} \| \mbf y - f(\mbf x  + \bm \eta)\|^2_2 \quad \text{s.t.} \quad \| \bm \eta \|_{\infty} \leq \epsilon \, ,  \|\bm \eta \|_{0,\mcl S} = 1 \, .
\end{equation}
A similar formulation holds as well for classification problems. 
The mixed norm $\|.\|_{0,\mcl S}$ in widely used in signal processing and compressed sensing to promoting group sparsity  \cite{rao_universal_2012}.

\subsection{Single Subset Attack for the Quadratic Problem}\label{sec:quadratic_pixel}
%
As in Section \ref{sec:quadratic}, the approximations $f(\mbf x + \bm \eta) \approx f(\mbf x) + \ma{J}_f(\mbf x)  \bm \eta$ and $\mbf y \approx f(\mbf x)$ simplify the problem \eqref{eq:general_pixel_attack} to 
\begin{equation}\label{eq:quadratic_pixel_problem}
\max_{\bm \eta} \| \ma{J}_f(\mbf x) \bm \eta \|_2^2 \quad \mathrm{s.t.} \quad \|\bm \eta \|_{\infty} \leq \epsilon \, , \, \|\bm \eta \|_{0,\mcl S} = 1 \, .
\end{equation}
As it was mentioned above, the problem is NP-hard without the mixed-norm constraint. We try to find an approximate solution to a simpler problem where only the set $\mcl S_s$ is to be modified by the attacker for $s\in[S]$. Finding the perturbation on this set amounts to solving the following problem:
\begin{equation}\label{eq:quadratic_eta_s}
\bm \eta_{s} = \argmax_{\bm \eta}\| \ma{J}_f(\mbf x) \bm \eta \|_2^2 \text{ s.t. } \|\bm \eta \|_{\infty} \leq \epsilon\, , (\bm \eta)_{i_s^z} = 0\, \, \forall i_s^z \notin \mcl S_s \, ,
\end{equation}
where $(\bm \eta)_{i_s^z}$ denotes the $i_s^z$-th entry of $\bm \eta$.
As discussed in Section \ref{sec:quadratic}, this problem is NP-hard. Since the maximization of a quadratic bowl over a box constraint lies in the corner points of the feasible set, we have:
 \[
 \bm \eta_{s} = \epsilon \sum_{z=1}^{Z} \rho_{i_s^z}^* \mbf e_{i_s^z} 
 \] 
 with $\bm \rho_s^* \triangleq (\rho_{i_s^1}^*, \dots, \rho_{i_s^Z}^*)^\T  \in \{-1, +1\}^Z$. The optimization problem can be equivalently formulated as follows:
\begin{align*}
\bm \rho_s^* &= \argmax_{\bm \rho_s \in \{-1, +1\}^Z} \left\| \ma{J}_f(\mbf x) (\epsilon \sum_{z=1}^{Z} \rho_{i_s^z} \mbf e_{i_s^z})  \right\|_2^2 \\
&= \argmax_{\bm \rho_s \in \{-1, +1\}^Z} \sum_{z=1}^{Z} \sum_{w=1}^{Z}  \rho_{i_s^z} \rho_{i_s^{w}} \ma{J}_{i_s^z}^\T  \ma{J}_{i_s^{w}} \, ,
\end{align*}
for  $\bm \rho_s \triangleq (\rho_{i_s^1}, \dots, \rho_{i_s^Z})^\T  \in \{-1, +1\}^Z$ \textcolor{myred}{and $\ma{J}_{k}$ the $k$-th column of $\ma{J}_f(\mbf x)$}.
This problem is indeed related to the well known MaxCut problem introduced by \cite{goemans1995improved}. The literature is abound with works on the MaxCut problem, the efficient solutions and their recovery guarantees. A common solution to this problem is a relaxation by a semi-definite programming problem. However, as  
we discussed  semi-definite programming solvers scales badly with the input dimension. 
Therefore, in the spirit of obtaining fast and scalable approximate solutions, that can later be used to design adversarial perturbations through iterative approximations, we propose to obtain approximate solutions using a greedy approach.  
To that end, and without loss of generality, let us assume that for a given $\mcl S_s$, the indices $i_s^1, \dots , i_s^Z\in\mcl S_s$ are sorted such that $\| \ma{J}_{i_s^1} \|_2 \geq \cdots \geq \| \ma{J}_{i_s^Z} \|_2$.  An approximate solution for $\rho_{{i}_s^z}^*$ is calculated in a greedy manner by setting $\rho_{{i}_s^1}^* =1$ and recursively calculating 
\begin{equation}
\rho_{{i}_s^z}^* = \mrm{sign}\left( \left(\sum_{j=1}^{z-1} \rho_{{i}_s^j}^* \mbf J_{{i}_s^j} \right)^\T  \mbf J_{{i}_s^z} \right) \quad \forall \, z=2,\dots, Z\, .
\label{eq:Greedy}
\end{equation}
As for greedy algorithms, this solution is fast, however, there is no optimality guarantee for it. For the case where $S=1$ and $S=M$, the expression \eqref{eq:Greedy} is an approximate solution for \eqref{eq:original_quad_opt} under the $\ell_{\infty}$-norm constraint on the perturbation (i.e., $p=\infty$).

\textcolor{myred2}{This method provides an approximate solution} to the problem for a given choice of $\mcl S_s$. The solution to \eqref{eq:quadratic_pixel_problem}  can then be obtained by solving the following problem:
\begin{align}
&\bm \eta^{*} = \bm \eta_{s*} \, , \label{eq:single_pixel_quadsol}\\
&\text{ with } 
s^* = \argmax_{s} \left\| \ma{J}_f(\mbf x) \bm \eta_s  \right\|_2^2 
\text{ and }
\bm \eta_{s} = \epsilon \sum_{z=1}^{Z} \rho_{i_s^z}^* \mbf e_{i_s^z} \nonumber
\end{align}
This is based on naive exhaustive research over the subsets which is tractable only when the number of subsets is small enough.
%


	

\subsection{Single Subset Attack for the Linear Problem}
Following the steps from Section \ref{sec:linear}, we make use of the approximation $L(\mbf x, \bm \eta) \approx L(\mbf x, \tilde{\bm \eta}) + (\bm \eta - \tilde{\bm \eta})^\T  \nabla L(\mbf x, \tilde{\bm \eta})$ which leads to the formulation of \eqref{eq:general_pixel_attack} as a linear programming problem
\begin{equation}\label{eq:linear_pixel_problem}
\max_{\bm \eta} \bm \eta^\T  \nabla L(\mbf x, \tilde{\bm \eta}) \quad \text{s.t.} \quad \|\bm \eta \|_{\infty} \leq \epsilon \, , \, \|\bm \eta \|_{0,\mcl S} = 1 \, .
\end{equation}
In the same manner as Section \ref{sec:quadratic_pixel}, for a given subset $\mcl S_s$ we define $\bm \eta_s$ as in \eqref{eq:quadratic_eta_s}. For this linear problem that results in 
\[
\bm \eta_{s} = \argmax_{\bm \eta} \nabla L(\mbf x, \tilde{\bm \eta})^\T  \bm \eta \text{ s.t. } \|\bm \eta \|_{\infty} \leq \epsilon\, , (\bm \eta)_{i_s^z} = 0\, \, \forall i_s^z \notin \mcl S_s \, .
\]
In contrast to the definition of $\bm \eta_s$ from \eqref{eq:quadratic_eta_s}, in this case we have a closed form solution for $\bm \eta_s$ as
\[
\bm \eta_{s} = \epsilon \sum_{z=1}^{Z} \mrm{sign} ((\nabla L(\mbf x, \tilde{\bm \eta}))_{i_s^z}) \mbf e_{i_s^z}\, , 
\]
which implies that $\nabla L(\mbf x, \tilde{\bm \eta})^\T  \bm \eta_s = \sum_{z=1}^{Z} \left| (\nabla L(\mbf x, \tilde{\bm \eta}))_{i_s^z} \right|$.
Therefore, the linear problem for the single subset attack \eqref{eq:linear_pixel_problem} has the closed form solution
\begin{equation}\label{eq:single_pixel_linsol}
\bm \eta^{*} = \bm \eta_{s*} \, , \text{ with } \, s^* = \argmax_{s} \sum_{z=1}^{Z} \left| (\nabla L(\mbf x, \tilde{\bm \eta}))_{i_s^z} \right|
\end{equation}
and $\bm \eta_{s} = \epsilon \sum_{z=1}^{Z} \mrm{sign} ((\nabla L(\mbf x, \tilde{\bm \eta}))_{i_s^z}) \mbf e_{i_s^z}$.
%
This results are valid for classification as well when replacing $L$ with $L(\mathbf{x}, \bm{\eta}) = -( f_{k(\mathbf{x})}(\mathbf{x} + \bm{\eta}) - \max_{l \neq k(\mathbf{x})} f_l(\mathbf{x} + \bm{\eta}) )$.
%
\section{Iterative Versions of the Linear Problem}\label{sec:iter}
%
\begin{table}[tb]
	\centering
	\begin{tabular}{c|c|c}
	Type of Attack & Relaxed Problem & Closed-Form Solution
	\\
	\hline
	$\ell_2$ $/$ $\ell_{\infty}$	
	& 
	\eqref{eq:original_lin_problem}
	& \eqref{eq:quad_sol_l2} $/$ \eqref{eq:single_pixel_quadsol}$^\diamond$
	\\ 
	constrained	
	& 
	\eqref{eq:original_quad_opt}
	& \eqref{eq:minMainOP}$^\blacktriangledown$ \\
	\hline
	Single-Subset	& 
	\eqref{eq:quadratic_pixel_problem}
	& \eqref{eq:single_pixel_quadsol}$^\diamond$\\ 
	attack	
	& 
	\eqref{eq:linear_pixel_problem}
	& \eqref{eq:single_pixel_linsol}$^\blacktriangledown$ \\
	\end{tabular}
	\vspace{0.02\linewidth}
	\caption{Summary of the obtained closed-form solutions. \textbf{Remarks}: $(\blacktriangledown)$ valid for regression and classification, $(\diamond)$ only an approximate solution.} \label{tab:summary}
\end{table}
\begin{table}
	\centering
	\begin{tabular}{c|c|c|c}
		Algorithm & Objective function $L$ & Iterative & Dithering \\
		\hline
		\ac{FGSM}  \cite{goodfellow_explaining_2014}& cross-entropy & $\times$ &  $\times$\\
		DeepFool \cite{moosavi2016deepfool} & \eqref{eq:loss_ours} with $l$ chosen using $\hat\rho_1(f)$ & $\checkmark$&  $\times$\\
		\ac{BIM} \cite{BIM2016}& cross-entropy & $\checkmark$ &  $\times$ \\
		\ac{PGD} \cite{pgd_attack} & cross-entropy & $\checkmark$& $\checkmark$ \\
		Ensemble \cite{ensemble_attack}& cross-entropy using another $f$ & $\checkmark$&  $\times$ \\
		Targeted & \eqref{eq:loss_ours} with $l$ fixed to the target & $\checkmark$ & $\checkmark$ \\
		Ours & \eqref{eq:loss_ours} & $\checkmark$ & $\checkmark$
	\end{tabular}
	\caption{Recovering Existing Attacks in Classification using this Framework.} 
	\label{tab:literature_overview}
\end{table}
In the previous sections we have formulated several variations of the problem of generating adversarial perturbations. These results are summarized in Table \ref{tab:summary}. In the same spirit as DeepFool, we make use of the obtained closed form solutions to design adversarial perturbations using iterative approximations. In Algorithm \ref{alg:iter} an iterative method based on the linear problem \eqref{eq:original_lin_problem} is introduced. This corresponds to a gradient ascent method for maximizing $L(\mbf x, \bm \eta)$ with a fixed number of iterations and steps of equal $\ell_p$-norm.
\begin{algorithm}[htb]
	\begin{algorithmic}
	\STATE \textbf{input:} $\mbf x$, $f$, $T$, $\epsilon$, $\tilde \epsilon_1, \dots, \epsilon_T$.
	\STATE \textbf{output:} $\bm \eta^*$.
	\STATE Initialize $\bm \eta_1 \leftarrow \mbf 0$.
	\FOR{$t=1, \dots, T$}
		\STATE $\tilde{\bm \eta}_t \leftarrow \bm \eta_t + \mathrm{random(\tilde \epsilon_t)}$
		\STATE $\bm \eta_t^* \leftarrow \argmax_{\bm \eta} \bm \eta^\T  \nabla L(\mbf x, \tilde{\bm \eta}_t) \text{ s.t. } \| \bm \eta \|_p \leq \epsilon/T $ (Table \ref{tab:summary})
		\STATE $\bm \eta_{t+1} \leftarrow \bm \eta_t + \bm \eta_t^* $
	\ENDFOR
	\STATE \textbf{return:} $\bm \eta^* \leftarrow \bm \eta_T$
	\end{algorithmic}
	\caption{Iterative extension for $\ell_p$ constrained methods.}
	\label{alg:iter}
\end{algorithm}
\begin{figure}[ht!]
	\centering
	\begin{minipage}[b]{0.75\linewidth}
	\centering
	\begin{tabular}{cc|cc}
	original & adv & original & adv  \\
	\hline
	\begin{minipage}[b]{0.16\linewidth}
	  \includegraphics[width=0.99\linewidth]{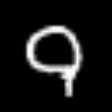}
	\end{minipage}
	&
	\begin{minipage}[b]{0.16\linewidth}
	  \includegraphics[width=0.99\linewidth]{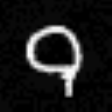}
	\end{minipage}
	&
	\begin{minipage}[b]{0.16\linewidth}
	    \includegraphics[width=0.99\linewidth]{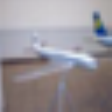}
	\end{minipage}
	&
	\begin{minipage}[b]{0.16\linewidth}
	    \includegraphics[width=0.99\linewidth]{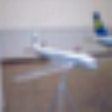}
	\end{minipage}
	\\
	nine &zero & airplane& ship \\
	\hline
	\begin{minipage}[b]{0.16\linewidth}
	  \includegraphics[width=0.99\linewidth]{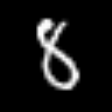}
	\end{minipage}
	&
	\begin{minipage}[b]{0.16\linewidth}
	  \includegraphics[width=0.99\linewidth]{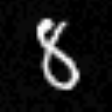}
	\end{minipage}
	&
	\begin{minipage}[b]{0.16\linewidth}
	    \includegraphics[width=0.99\linewidth]{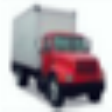}
	\end{minipage}
	&
	\begin{minipage}[b]{0.16\linewidth}
	    \includegraphics[width=0.99\linewidth]{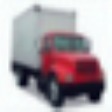}
	\end{minipage}
	\\
	eight &three & truck& car 
	\\
	\hline
	\begin{minipage}[b]{0.16\linewidth}
	  \includegraphics[width=0.99\linewidth]{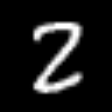}
	\end{minipage}
	&
	\begin{minipage}[b]{0.16\linewidth}
	  \includegraphics[width=0.99\linewidth]{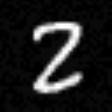}
	\end{minipage}
	&
	\begin{minipage}[b]{0.16\linewidth}
	    \includegraphics[width=0.99\linewidth]{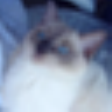}
	\end{minipage}
	&
	\begin{minipage}[b]{0.16\linewidth}
	    \includegraphics[width=0.99\linewidth]{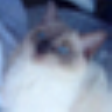}
	\end{minipage}
	\\
	two &three & cat& dog 
	\\
	\hline
	\multicolumn{2}{c}{(a) MNIST} &\multicolumn{2}{c}{(b) CIFAR-10}
	\end{tabular}
	\end{minipage}
	\caption{Examples of correctly classified images that are misclassified when adversarial noise is added using Algorithm 1.}
	\label{fig:examples2}
\end{figure}
\begin{figure}[hp]
	\centering
	\begin{minipage}[t]{.7\linewidth}
	\centerline{\includegraphics[width=.85\linewidth]{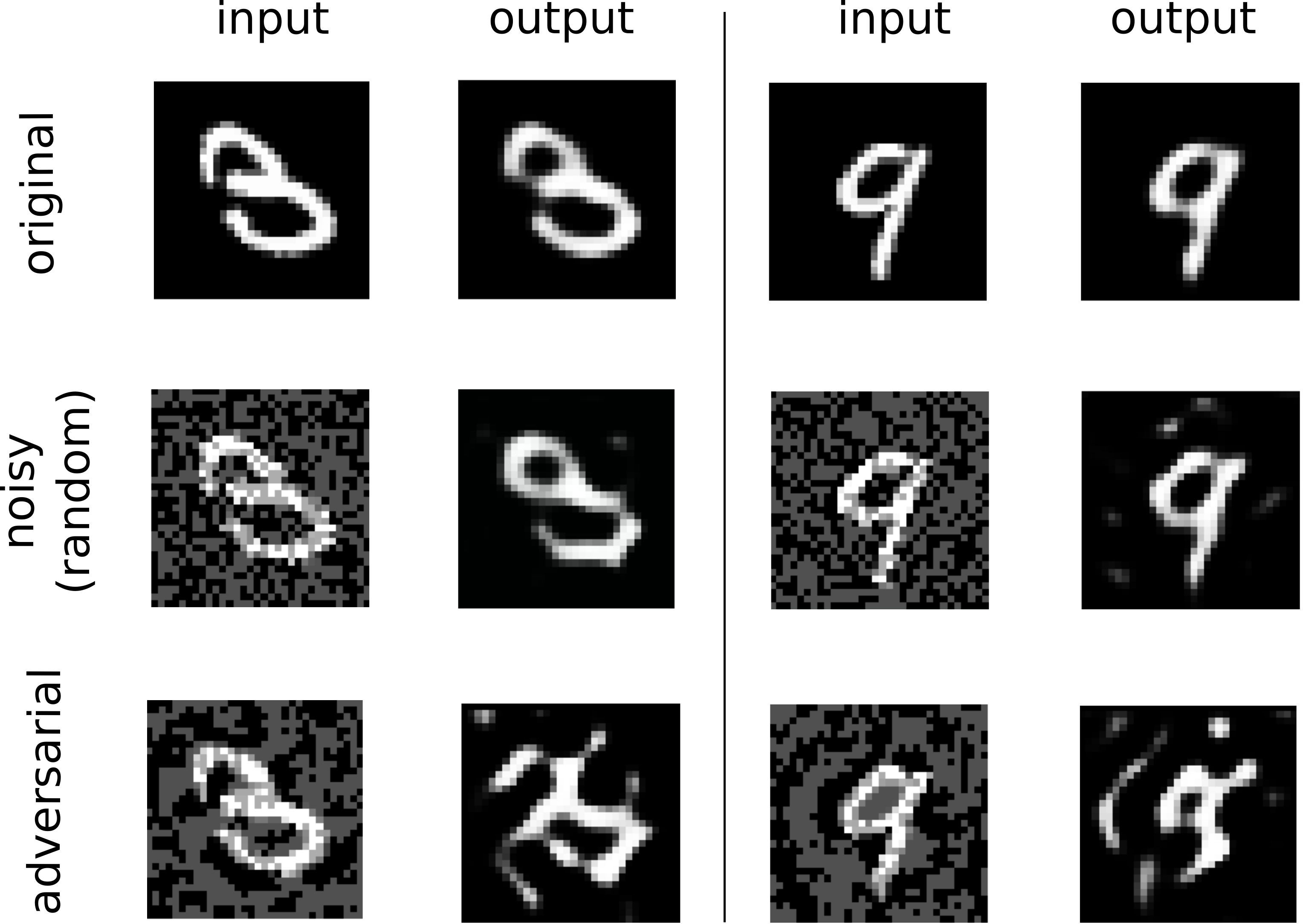}}
	\centerline{(a) Autoencoder (96\% compression)}\medskip
	\end{minipage}
	\begin{minipage}[t]{.7\linewidth}
	\centering
	\centerline{\includegraphics[width=.85\linewidth]{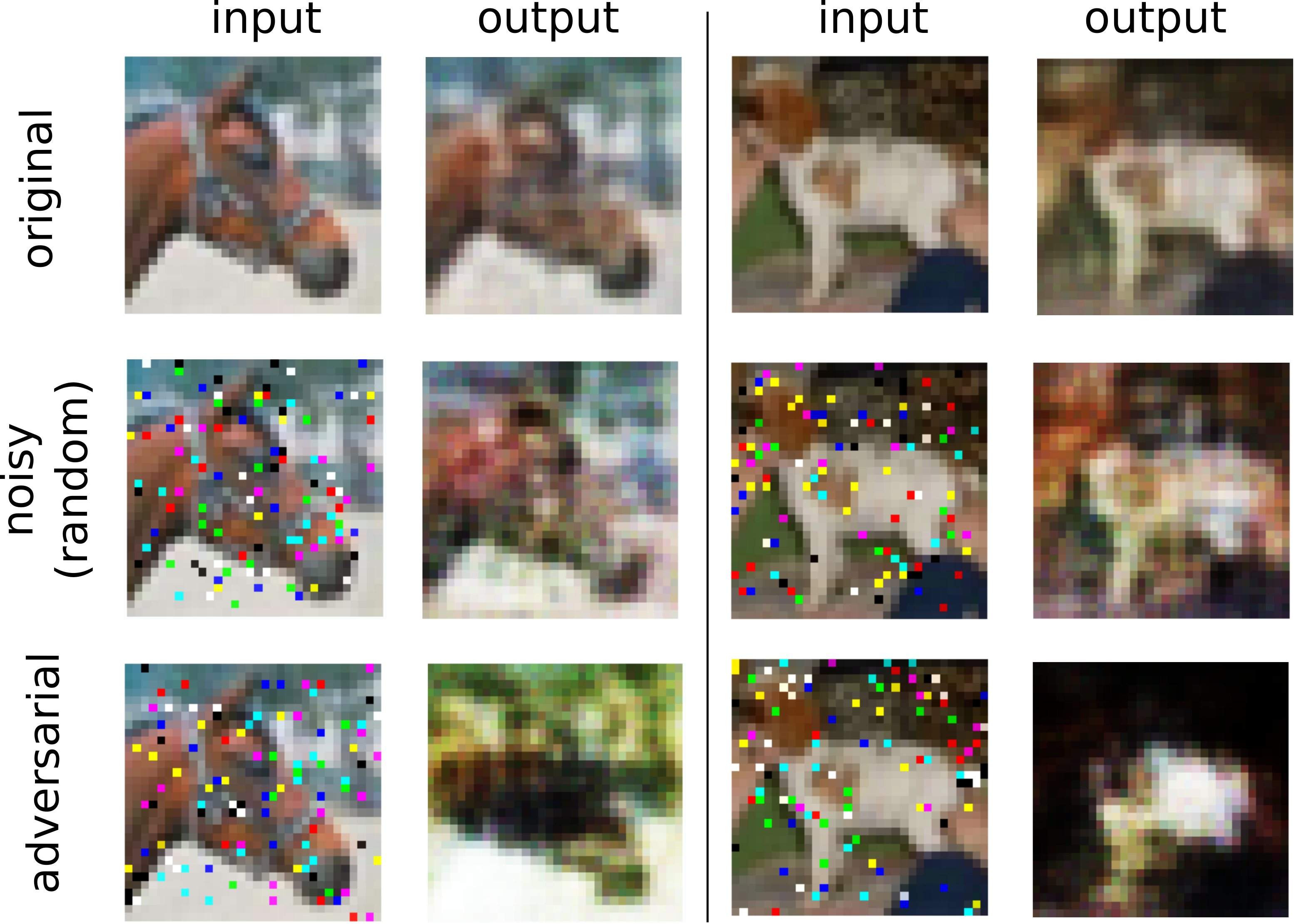}}
	\centerline{(b) Autoencoder (50\% compression)}\medskip
	\end{minipage}

	\begin{minipage}[t]{.7\linewidth}
	\centering
	\centerline{\includegraphics[width=.85\linewidth]{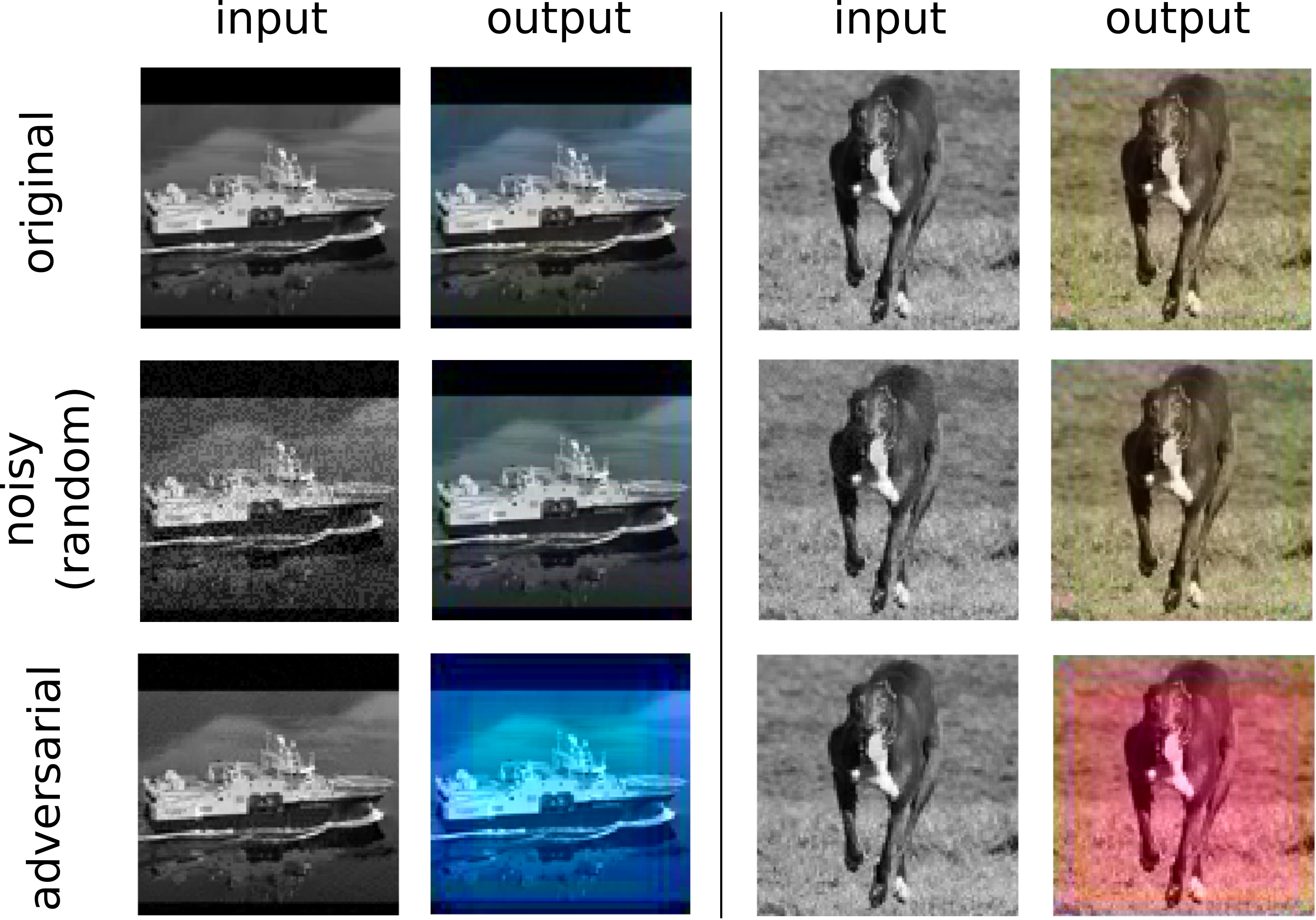}}
	\centerline{(c) Image Colorization}\medskip
	\end{minipage}
	\caption{ Adversarial examples for (a): MNIST autoencoder obtained using \textit{quadratic-$\ell_\infty$}, (b): CIFAR-10 autoencoder obtained using \textit{linear-pixel-$100$}, (c): STL-10 colorization network obtained using \textit{linear-$\ell_\infty$-$20$}.}
	\label{fig:examples}
 \end{figure}

While generalizing the results for \eqref{eq:original_lin_problem} into a gradient ascent method is trivial, the same is not true for the quadratic problem \eqref{eq:original_quad_opt}. The main reason for this is that, using the approximation $\mbf y \approx f(\mbf x)$, we were able to simplify \eqref{eq:quadratic_other_problem} into \eqref{eq:original_quad_opt} since $ \mbf y - f(\mbf x) \approx \mbf 0$. For an iterative version of this solution we must successively approximate  $f(\cdot)$ around different points $\tilde{\mbf x}$, which leads to  $ \mbf y - f(\tilde{\mbf x}) \neq \mbf 0$ even if $\mbf y = f(\mbf x)$. We leave the task of investigating alternatives for designing iterative methods with the results for \eqref{eq:original_quad_opt} for future works, and in Section \ref{sec:expres} show that the non-iterative solutions for this method are still competitive.  

Finally, replacing line 5 of Algorithm \ref{alg:iter} with
\[
\bm \eta_t^* \leftarrow \argmax_{\bm \eta} \bm \eta^\T  \nabla L(\mbf x, \tilde{\bm \eta}_t) \text{ s.t. } \| \bm \eta \|_p \leq \epsilon\, , \|\bm \eta \|_{0,\mcl S} = 1 
\]
leads to a multiple subset attack, since we modify the values of one subset at every iteration. At every iteration, we may exclude the previously modified subsets from $\mcl S$ in order to ensure that a new subset is modified.

\section{Experiments}\label{sec:expres}
\begin{figure}[ht]
	\centering
	\begin{minipage}[b]{.4\linewidth}
	\centering
	\centerline{\includegraphics[width=.99\linewidth]{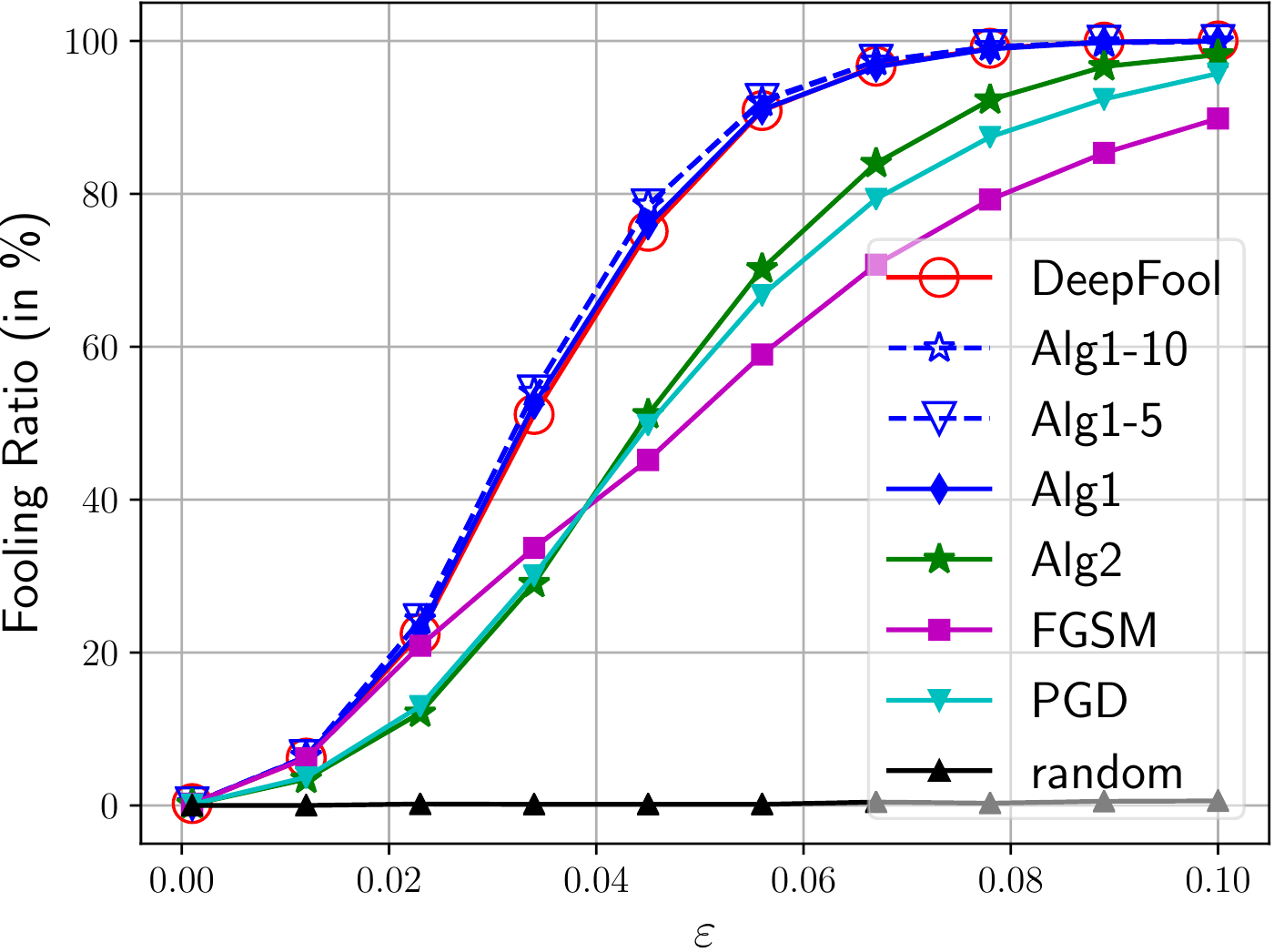}}
	\centerline{(a) FCNN}\medskip
	\end{minipage}
	\begin{minipage}[b]{.4\linewidth}
	\centering
	\centerline{\includegraphics[width=.99\linewidth]{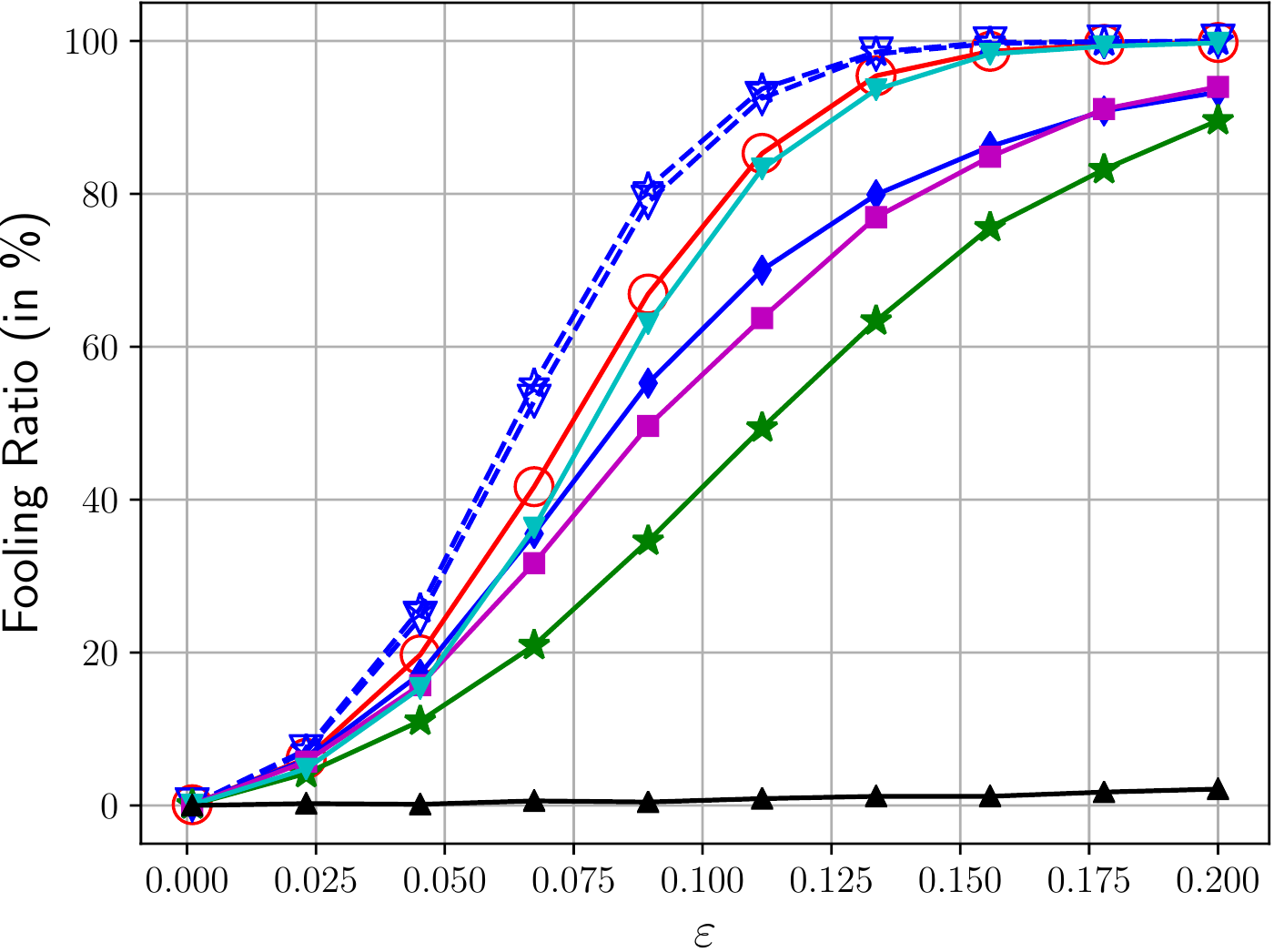}}
	\centerline{(b) LeNet-5}\medskip
	\end{minipage}

	\begin{minipage}[b]{.4\linewidth}
	\centering
	\centerline{\includegraphics[width=.99\linewidth]{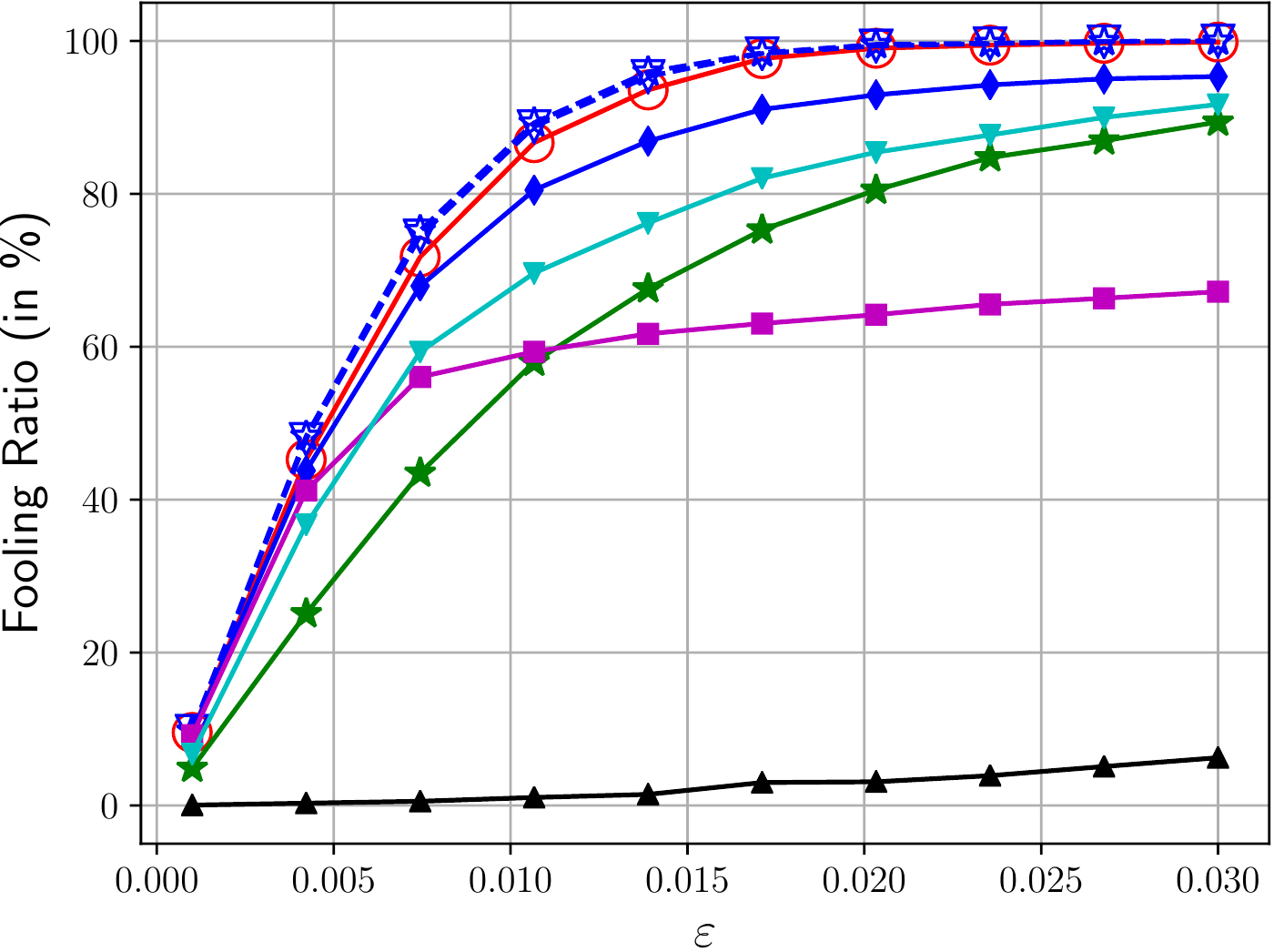}}
	\centerline{(c) NIN}\medskip
	\end{minipage}
	\begin{minipage}[b]{.4\linewidth}
	\centering
	\centerline{\includegraphics[width=.99\linewidth]{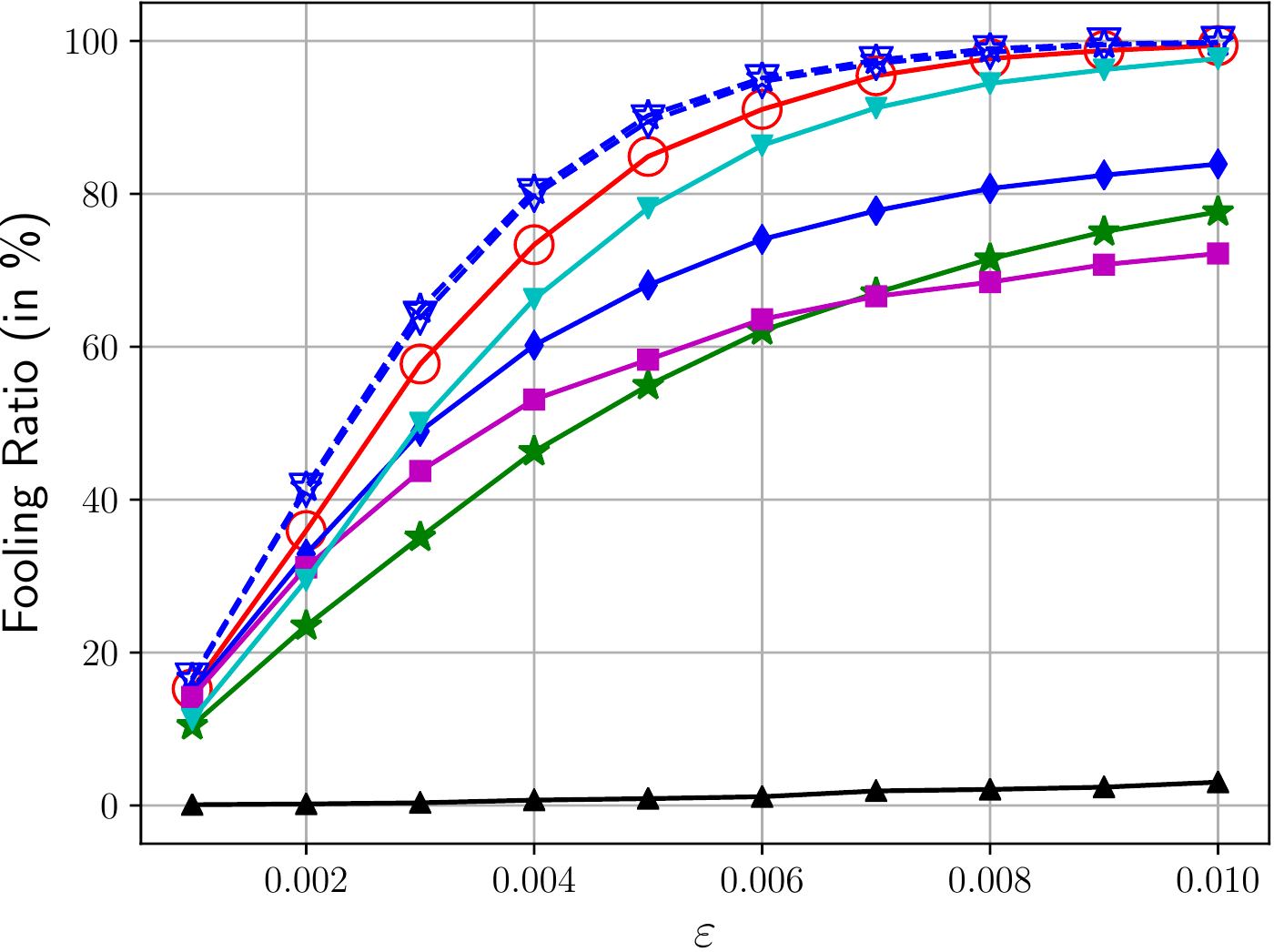}}
	\centerline{(d) DenseNet}\medskip
	\end{minipage}
	\caption{(a) and (b): Fooling ratio of the adversarial samples for different values of $\epsilon$ on the MNIST test dataset. (c) and (d): Fooling ratio of the adversarial samples for different values of $\epsilon$ on the CIFAR-10 test datasets.}
	\label{fig:res}
 \end{figure}
In this section, the proposed methods are used to fool neural networks in classification and regression problems.  The goal of this section is twofold. First, we would like to examine the performance of the newly proposed attack in classification tasks, thereby showing the utility of current adversarial generation framework. Secondly we generate adversarial perturbations for regression tasks which only received small attention in the literature. 
For this purpose we use the MNIST \cite{mnist}, CIFAR-10 \cite{cifar10}, and STL-10 datasets. 

\subsection{Classification}
As discussed in Section \ref{sec:avd_and_rub}, the appropriate loss function $L(\mathbf x, \bm \eta)$ for image classification tasks that should be used in \eqref{eq:MainOpt} is given by \eqref{eq:originalOP}. For this problem, $\| \bm{\eta} \|_{\infty} \leq \epsilon$ is a common constraint that models the undetectability, for sufficiently small $\epsilon$, of adversarial noise by an observer. However solving \eqref{eq:MainOpt} involves finding the function $L(\mathbf x, \mbf 0)$ which is defined as the minimum of $K-1$ functions with $K$ being the number of different classes. In large problems, this may significantly increase the computations required to fool one image. Therefore, we include a simplified version of this algorithm in our simulations. 
The non-iterative methods might not guarantee the fooling of the underlying network but on the other hand, the iterative methods might suffer from convergence problems.

\begin{figure}[ht!]
\centering
  \begin{minipage}[h]{.40\linewidth}
   \centering
   \centerline{\includegraphics[width=.95\linewidth]{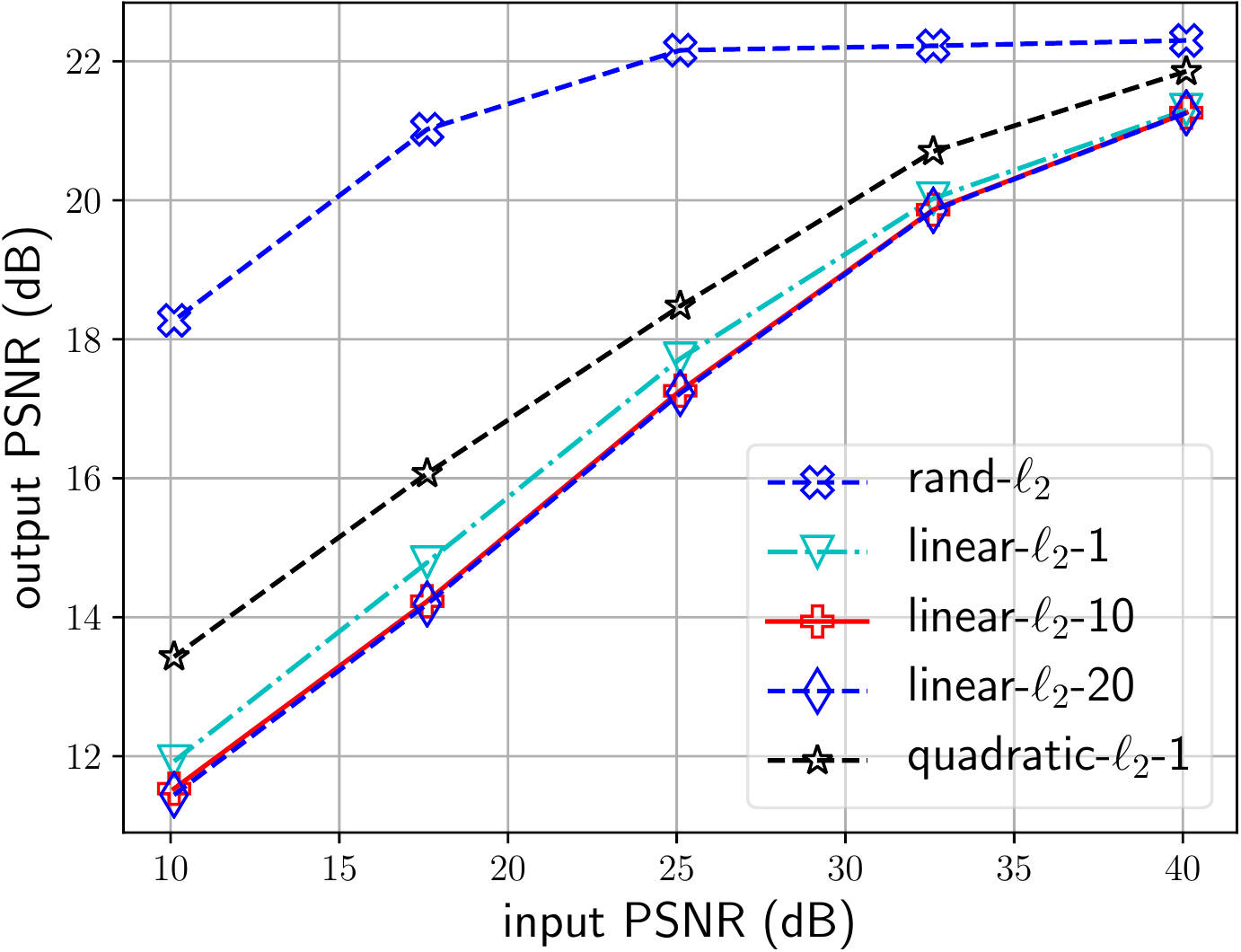}}
   \centerline{(a) MNIST: $\ell_2$ constrained}\medskip
 \end{minipage}
%
%
 \begin{minipage}[h]{.40\linewidth}
   \centering
   \centerline{\includegraphics[width=.95\linewidth]{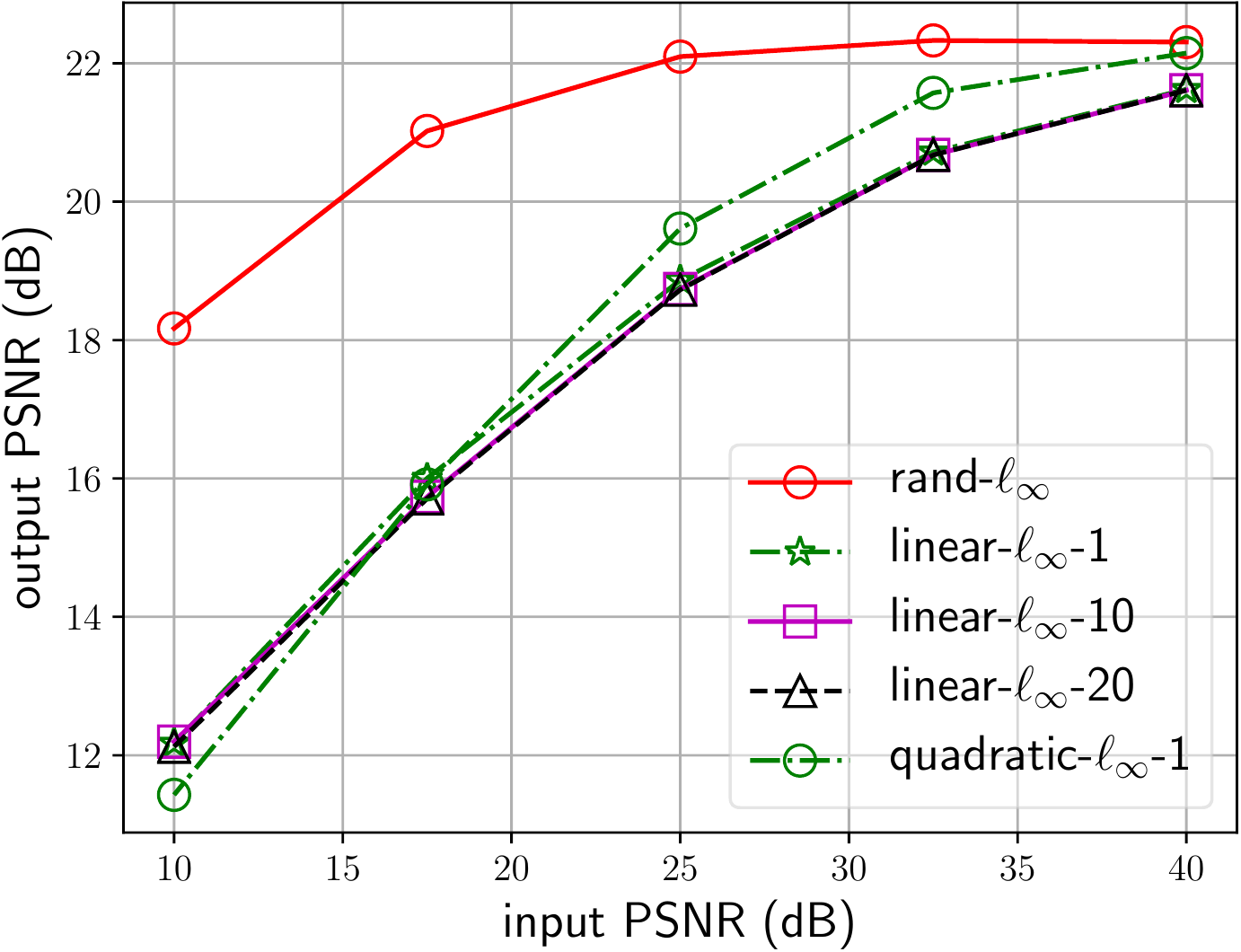}}
   \centerline{(b) MNIST: $\ell_\infty$ constrained}\medskip
 \end{minipage}

 \begin{minipage}[h]{.40\linewidth}
   \centering
   \centerline{\includegraphics[width=.95\linewidth]{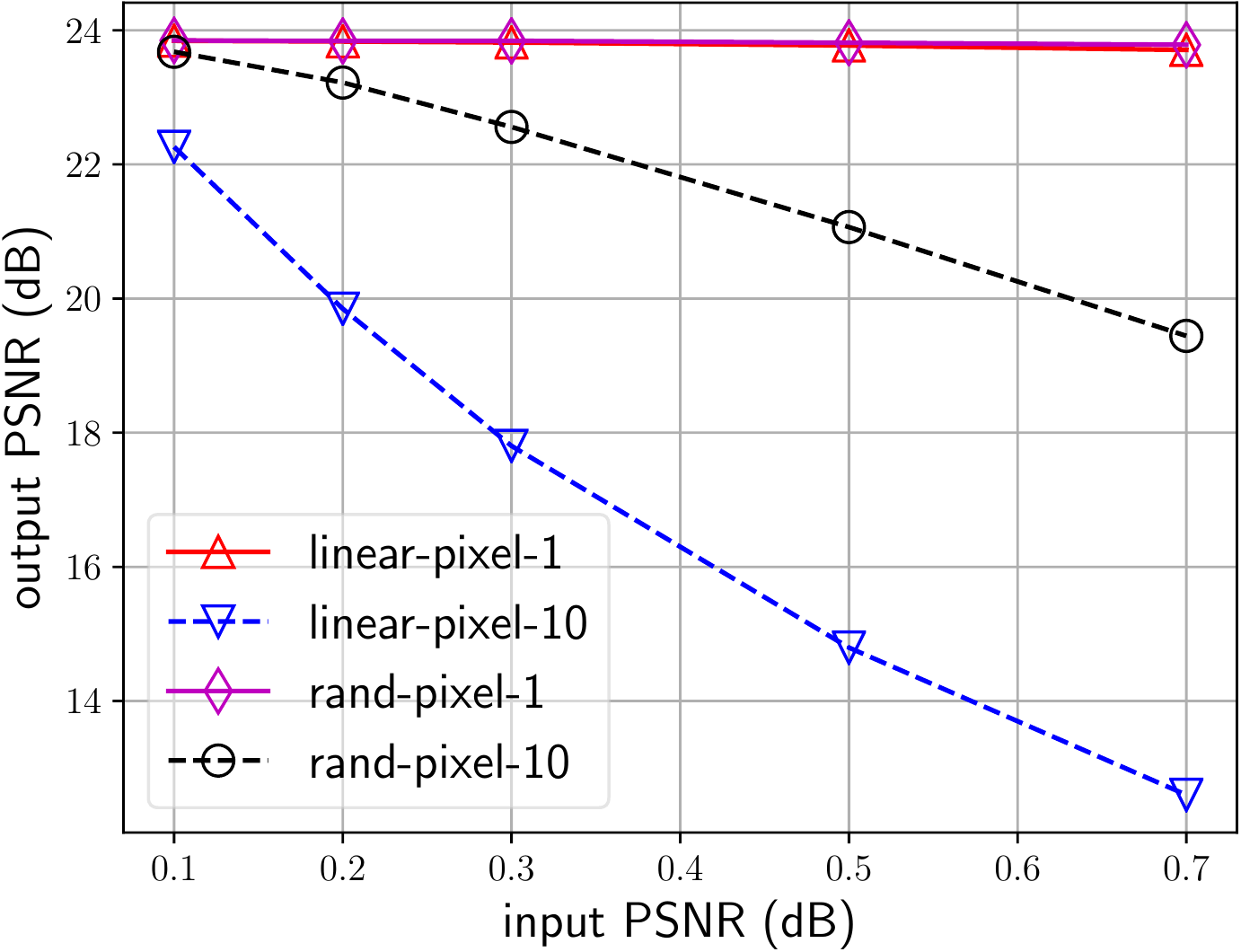}}
   \centerline{(c) CIFAR-10: Multiple pixel attack}\medskip
 \end{minipage}
 %
 %
 \begin{minipage}[h]{.40\linewidth}
   \centering
   \centerline{\includegraphics[width=.95\linewidth]{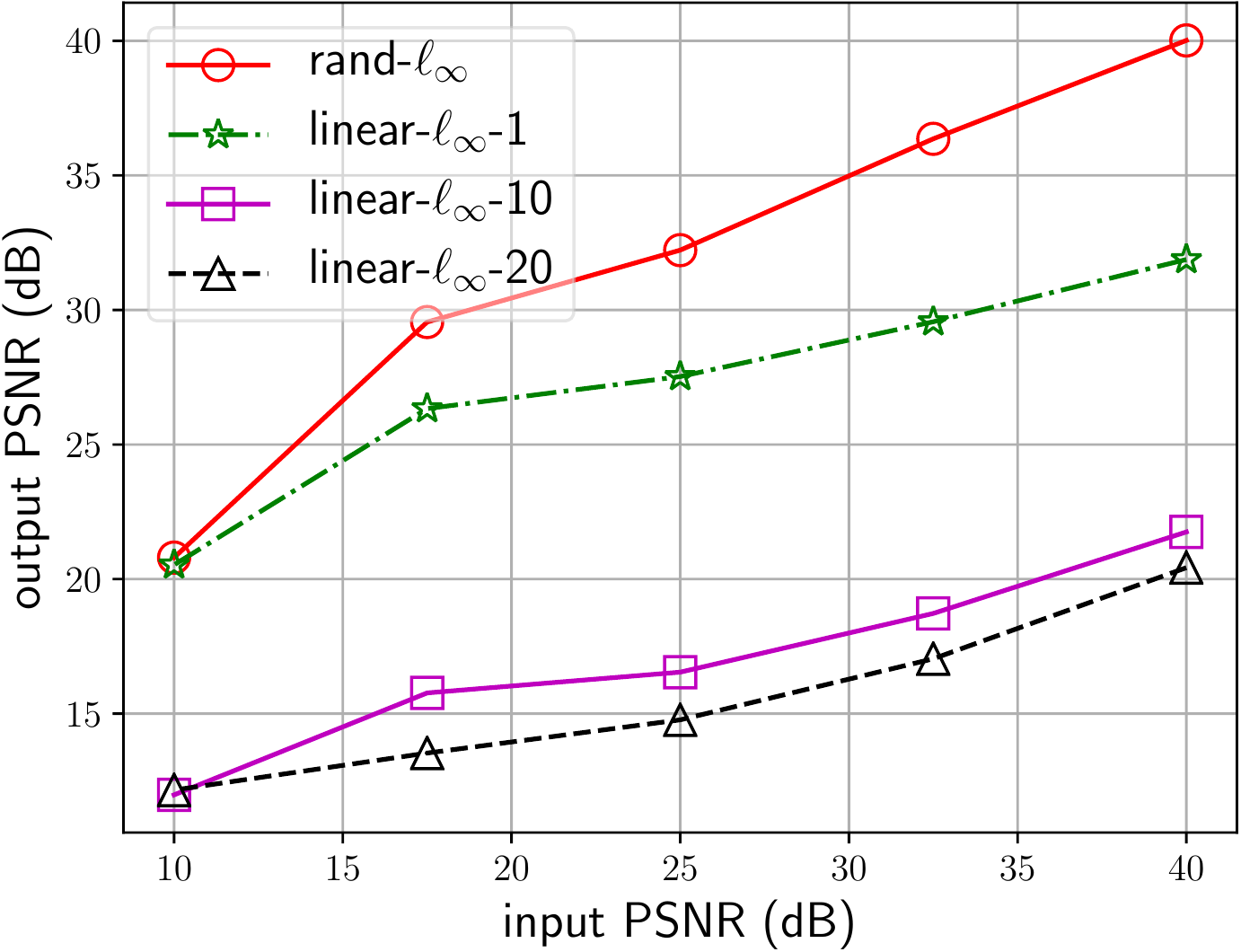}}
   \centerline{(d) STL-10 (colorization): $\ell_\infty$ constrained}\medskip
 \end{minipage}
 \caption{ Output PSNR for (a): MNIST autoencoder under $\ell_2$-norm constraint, (b): MNIST autoencoder under $\ell_\infty$-norm constraint, (c): CIFAR-10 autoencoder under multiple pixel attacks, (d): STL-10 colorization network under $\ell_\infty$-norm constraint.}
\label{fig:results}
\end{figure}
%
To benchmark the proposed adversarial algorithms, we consider following methods tested on the aforementioned datasets:
\begin{itemize}
\item \textbf{Algorithm 1}: This algorithm solves \eqref{eq:MainOpt} with $L(\mathbf x, \cdot)$ given by \eqref{eq:originalOP}. Note that, for evaluating $L$ at a given $\mathbf x$ one must search over all $l \neq k(\mathbf x)$. This can be computationally expensive when the number of possible classes (i.e., the number of possible values for $l$) is large. The $\ell_\infty$-norm is chosen for the constraint. Moreover, an example of adversarial images obtained using this algorithm is shown in Figure \ref{fig:examples}.
\item \textbf{Algorithm 1}-$T$: This is the iterative version of Algorithm 1 with $T$ iterations. The adversarial perturbation is the sum of $T$ perturbation vectors with $\ell_\infty$-norm of $\epsilon / T$ computed through $T$ successive approximations.
\item \textbf{Algorithm 2}: This algorithm approximates \eqref{eq:originalOP} with $L(\mathbf x, \bm \eta) \approx f_{k(\mathbf{x})}(\mathbf{x} + \bm \eta)$, thus reducing the computation of $L(\mathbf x)$ when the number of classes is large. Note that we cannot use $L(\mathbf x, \bm{\eta})<0$ to guarantee that we have fooled the network. Nevertheless, the lower the value of $L(\mathbf x , \bm{\eta})$ the most likely it is that the network has been fooled. The same reasoning is valid for the FGSM algorithm.
\item \textbf{FGSM}: This well-known method was proposed by \cite{goodfellow_explaining_2014} where $L(\mbf x, \bm \eta)$ is replaced by the negative training loss for the input $\mbf x + \bm \eta$. Usually the cross-entropy loss is used for this purpose. With the newly replaced function,  \eqref{eq:MainOpt} is solved for $p=\infty$.
\item \textbf{PGD}: This method is the iterative version of \ac{FGSM} ($T > 1$) with $\tilde \epsilon_1 = \epsilon$ and $\tilde \epsilon_t = 0 $ for all $t>1$. It constitutes one of the state of the art attacks in the literature.
\item \textbf{DeepFool}: This method was proposed in \cite{moosavi2016deepfool} and makes use of iterative approximations. Every iteration of DeepFool can be written within our framework by replacing $L$ by 
\begin{align*}
L(\mathbf x , \bm{\eta} ) &= f_{k(\mathbf{x})}(\mathbf{x} + \bm{\eta}) - f_{\hat l}(\mathbf{x} + \bm{\eta}) \, , \quad
\text{where} \\
\hat l &= \argmin_{l \neq k(\mathbf x)} \left\{ \frac{|f_{k(\mathbf{x})}(\mathbf{x}) - f_l(\mathbf{x})|}{ \|\nabla f_{k(\mathbf{x})}(\mathbf{x}) - \nabla f_l(\mathbf{x})\|_q } \right\} \, .
\end{align*}
The adversarial perturbations are computed using $p = \infty$, thus $q=1$, with a maximum of $50$ iterations. These parameters were taken from \cite{moosavi2016deepfool}. Note that $\hat l$ is chosen to minimize the robustness $\hat\rho_1(f)$ for $\mcl D = \{\mbf x\}$.
\item \textbf{Random}: For benchmarking purposes, we also consider random perturbations with independent Bernoulli distributed entries with \textcolor{myblue}{$\mathbb P (\epsilon) = \mathbb P (-\epsilon) = \frac{1}{2}$}. This helps to demarcate the essential difference of adversarial and random perturbations.
\end{itemize}
Note that these methods from the literature can be expressed in terms of the proposed framework as summarized in Table \ref{tab:summary}. In that table we also include the black-box ensemble attack from \cite{ensemble_attack} and targeted attacks \cite{carlini2017towards,papernot2016limitations,2017arXiv170309387B,cisse2017houdini,sarkar2017upset}. In \cite{ensemble_attack} the target neural network function is not known thus another known neural network function $f$ is used instead, hoping that the obtained adversarial example transfers to the unknown network. Targeted attacks are used when the objective is to generate adversarial examples that are classified by target system as belonging to some given target class $\ell \in [K]$. That corresponds to fixing the $l$ in \eqref{eq:loss_ours}, that is $L(\mbf x, \bm \eta) = f_{k(\mbf x)}(\mbf x + \bm \eta) - f_{l}(\mbf x + \bm \eta)$.

The above methods are tested  on the following deep neural network architectures:
\begin{itemize}
  \item \textbf{MNIST} : A fully connected network with two hidden layers of size $150$ and $100$ respectively, as well as the LeNet-$5$ architecture \cite{lenet}. 
  \item \textbf{CIFAR-10} : The Network In Network (NIN) architecture \cite{nin}, and a $40$ layer DenseNet \cite{densenet}.
\end{itemize}

As a performance measure, we use the \emph{fooling ratio} defined in \cite{moosavi2016deepfool} as the percentage of correctly classified images that are missclassified when adversarial perturbations are applied. Of course, the fooling ratio depends on the constraint on the norm of adversarial examples. 
Therefore, in Figure \ref{fig:res} we observe the fooling ratio for different values of $\epsilon$ on the aforementioned neural networks. 
As expected, the increased computational complexity of iterative methods such as DeepFool and Algorithm 1-$T$ translates into increased performance with respect to non-iterative methods. 
Nevertheless, as shown in Figures \ref{fig:res}(a) and (c), the performance gap between iterative and non-iterative algorithms is not always significant. 
For the case of iterative algorithms, the proposed Algorithm 1-$T$ outperforms DeepFool and \ac{PGD}. The same holds true for Algorithm 1 with respect to other non-iterative methods such as FGSM, while Algorithm 2 obtains competitive performance with respect to FGSM. However, note that adversarial training using \ac{PGD} is the state of the art defense against adversarial examples, thus \ac{PGD} may still be a better choice than Algorithm 1-$T$ for adversarial training. 

Finally, we measure the robustness of different networks using $\hat{\rho}_1(f)$ and $\hat{\rho}_2(f)$, with $p=\infty$. We also include the minimum $\epsilon$, such that DeepFool obtains a fooling ratio greater than 99\%, as a performance measure as well. These results are summarized in Table \ref{tab:robust}, where we obtain coherent results between the $3$ measures.
\begin{table}[htb]
    \centering
    \begin{tabular}{l|c|c|c|c}
    & \small{Test} & \small{$\hat{\rho}_1(f)$} & \small{$\hat{\rho}_2(f)$}  & \small{fooled} \\
    & \small{error} &\small{\cite{moosavi2016deepfool}} & \small{(ours)}  & \small{$>$99\%} \\
    \hline
    \small{FCNN (MNIST)}& \small{1.7\%} & 0.036 & 0.034  & $\epsilon =$0.076 \\
    \small{LeNet-5 (MNIST)}& \small{0.9\%} & \textbf{0.077} & \textbf{0.061}  & $\epsilon =$\textbf{0.164} \\
    \hline
    \small{NIN (CIFAR-10)}& \small{13.8\%} & \textbf{0.012} & \textbf{0.004}  & $\epsilon =$\textbf{0.018} \\
    \small{DenseNet (CIFAR-10)}& \small{5.2\%} & 0.006 & 0.002  & $\epsilon =$0.010
    \end{tabular}
    \caption{Robustness measures for different classifiers.}
    \label{tab:robust}
\end{table}
%
%
\subsection{Regression} 
For the sake of clarity we use the notation \textit{quadratic-$\ell_p$} to denote the method of computing adversarial perturbations by solving the quadratic problem \eqref{eq:original_quad_opt} under the $\ell_p$-norm constraint. 
In the same manner, Algorithm \ref{alg:iter} with $T$ iterations and the $\ell_p$-norm constraint is referred to as \textit{linear-$\ell_p$-$T$}. Since the experiments carried out in this section are exclusively image based, we use the notation \textit{linear-pixel-$T$} to denote the multiple subset attack with  $\|\bm \eta \|_{0,\mcl S} = T$.
Since the aim of the proposed attacks is to maximize the MSE of the target system, we use the Peak-Signal-to-Noise Ratio (PSNR), which is a common measure for image quality and is defined as $\mrm{PSNR} = \text{(maximum pixel value)}^2 / \mrm{MSE}$, as the performance metric.

Similarly to \cite{moosavi2016deepfool}, we show the validity of our methods by comparing their performance against appropriate types of random noise. For $p=2$ the random perturbation is computed as $\bm \eta = \epsilon \, \mbf w / \| \mbf w\|_2$, where the entries of $\mbf w$ are independently drawn from a Gaussian distribution. For $p=\infty$ the random perturbation $\bm \eta$ has independent Bernoulli distributed entries with $\mbb P(\epsilon) = \mbb P(-\epsilon) = 1/2$. In the case of multiple subset attacks we perform the same approach as for $p = \infty$ but only on $T$ randomly chosen pixels, while setting the other pixels of $\bm \eta$ to zero.
In order to keep a consistent notation, we refer to these $3$ methods of generating random perturbations as \textit{random-$\ell_2$}, \textit{random-$\ell_\infty$}, and \textit{random-pixel-$T$} respectively.
For our experiments we use the MNIST, CIFAR-10 and STL-10 datasets. A different neural network is trained for each of these datasets. As in \cite{tabacof2016adversarial}, we also consider  autoencoders. For MNIST and CIFAR-10 we have trained fully connected autoencoders with $96 \%$ and $50\%$ compression rates respectively. In addition, we go beyond autoencoders and train the image colorization architecture from \cite{koala} for the STL-10 dataset. Different example images obtained from applying the proposed methods on these networks are shown in Figure \ref{fig:examples}. For instance, in Figure \ref{fig:examples}(a) we observe that the autoencoder trained on MNIST is able to denoise random perturbation correctly but fails to do so with adversarial perturbations obtained using the \textit{quadratic-$\ell_\infty$} method. Similarly, in Figure \ref{fig:examples}(b), the \textit{random-pixel-$100$} algorithm distorts the output significantly more than its random counterpart. These two experiments align with the observation of \cite{tabacof2016adversarial} that autoencoders tend to be more robust to adversarial attacks than deep neural networks used for classification. The deep neural network trained for colorization is highly sensitive to adversarial perturbations as illustrated in Figure \ref{fig:examples}(c), where the original and adversarial images are nearly identical.

While the results shown in Figure \ref{fig:examples} are for some particular images, in Figure \ref{fig:results} we measure the performance of different adversarial attacks using the average output PSNR over $20$ randomly selected images from the corresponding datasets. In Figures \ref{fig:results}(a) and \ref{fig:results}(b) we observe how computing adversarial perturbations through successive linearizations improves the performance. This behavior is more pronounced in Figure \ref{fig:results}(d), where iterative linearizations are responsible for more than $10$ dB of output PSNR reduction. Note that, in Figures \ref{fig:results}(a) and \ref{fig:results}(b) the non-iterative \textit{quadratic-$\ell_p$} algorithm performs competitively, even when compared to iterative methods. In Figure \ref{fig:results}(b) we observe that the autoencoder trained on CIFAR-10 is robust to single pixel attacks. However, an important degradation of the systems performance, with respect to random noise, can be obtained through adversarial perturbations in the $100$ pixels attack ($\approx 9.7\%$ of the total number of pixels). Finally, in Figure \ref{fig:results}(d), we can clearly observe the instability of the image colorization network to adversarial attacks. 
These experiments show that, even though autoencoders are somehow robust to adversarial noise, this may not be true for deep neural networks in other regression problems.

\section{Conclusion}
The perturbation analysis of different learning algorithms leads to a framework for generating adversarial examples via convex programming. For classification we have formulated already existing methods as special cases of the proposed framework as well as proposing novel methods for designing adversarial perturbations under various desirable constraints. This includes in particular single-pixel and single-subset attacks. The framework is additionally used to demonstrate adversarial vulnerability of regression algorithms by generating adversarial perturbations.  
We numerically evaluate the applicability of this framework first by benchmarking the newly introduced algorithms for classification through empirical simulations of the fooling ratio benchmarked against the well-known \ac{FGSM}, DeepFool, and \ac{PGD} methods.
Through experiments we have shown the existence of adversarial examples in regression for the case of autoencoders and image colorization tasks. 

\newpage 
\bibliographystyle{alpha}
\bibliography{sample}

\end{document}